\newtheorem{Thm}{Theorem}
\newtheorem{Lem}{Lemma}
\newtheorem{Cor}{Corollary}
\newtheorem{Asmp}{Assumption}
\newtheorem{Defn}{Definition}
\newtheorem{Prop}{Property}
\DeclareMathOperator*{\argmax}{arg\,max}
\icmltitlerunning{Lazy-CFR: fast and near-optimal regret minimization for extensive games with imperfect information}
\definecolor{codegreen}{rgb}{0,0.6,0}
\definecolor{codegray}{rgb}{0.5,0.5,0.5}
\definecolor{codepurple}{rgb}{0.58,0,0.82}
\definecolor{backcolour}{rgb}{0.95,0.95,0.92}
\lstdefinestyle{mystyle}{
    backgroundcolor=\color{backcolour},   
    commentstyle=\color{codegreen},
    keywordstyle=\color{magenta},
    numberstyle=\tiny\color{codegray},
    stringstyle=\color{codepurple},
    basicstyle=\footnotesize,
    breakatwhitespace=false,         
    breaklines=true,                 
    captionpos=b,                    
    keepspaces=true,                 
    numbers=left,                    
    numbersep=5pt,                  
    showspaces=false,                
    showstringspaces=false,
    showtabs=false,                  
    tabsize=2
}
\begin{document}

\twocolumn[
\icmltitle{Lazy-CFR: fast and near-optimal regret minimization for extensive games with imperfect information}

\begin{icmlauthorlist}
\icmlauthor{Yichi Zhou}{thu}
\icmlauthor{Tongzheng Ren}{thu}
\icmlauthor{Dong Yan}{thu}
\icmlauthor{Jialian Li}{thu}
\icmlauthor{Jun Zhu}{thu}
\end{icmlauthorlist}

\icmlaffiliation{thu}{Dept. of Comp. Sci. \& Tech., BNRist Center, State Key Lab for Intell. Tech. \& Sys., THBI Lab, Tsinghua University, Beijing, 100084, China}
\icmlcorrespondingauthor{Jun Zhu}{dcszj@mail.tsinghua.edu.cn}
\icmlcorrespondingauthor{Yichi Zhou}{vofhqn@gmail.com}

\icmlkeywords{Machine Learning, ICML}

\vskip 0.3in
]



\printAffiliationsAndNotice{}  

\begin{abstract}
Counterfactual regret minimization (CFR) is the most popular algorithm on solving two-player zero-sum extensive games with imperfect information and 
achieves state-of-the-art results in practice.
However, the performance of CFR is not fully understood, since empirical results on the regret are much better than the known upper bound in \cite{zinkevich2008regret}. Moreover, CFR has to traverse the whole game tree in each round, which is time-consuming in large scale games
. In this paper, we present a novel technique, lazy update, which can avoid traversing the whole game tree in each round. We propose a novel analysis on the regret of CFR with lazy update, which can also be applied to the vanilla CFR, resulting in a much tighter regret bound than that in \cite{zinkevich2008regret}. Inspired by lazy update, we further present a novel CFR variant, named Lazy-CFR. Compared to traversing $O(|\mathcal{I}|)$ information sets in the vanilla CFR, Lazy-CFR needs only to traverse $O(\sqrt{|\mathcal{I}|})$ information sets per round while keeping the regret bound almost the same, where $\mathcal{I}$ is the class of all information sets. As a result, Lazy-CFR shows better convergence results compared with the vanilla CFR. Experimental results consistently show that Lazy-CFR outperforms the vanilla CFR significantly.
\end{abstract}

\section{Introduction}\label{Sec:intro}
Extensive games provide a mathematical framework for modeling the sequential decision-making problems with imperfect information, 
which is common in economic decisions, negotiations and security. In this paper, we focus on solving two-player zero-sum extensive games with imperfect information (TEGI). In a TEGI, there are an environment with uncertainty and two players on opposite sides \citep{koller1992complexity}.  

Counterfactual regret minimization (CFR) \citep{zinkevich2008regret} provides a state-of-the-art algorithm for solving TEGIs with much progress in practice \citep{brown2017safe,moravvcik2017deepstack}. 
The most famous application of CFR is Libratus, the first program that defeats top human players in heads-up no-limit Texas Hold’em poker 
\citep{brown2017safe}. CFR works based on the fact that minimizing the regrets of both players makes the time-averaged strategy to approach the Nash Equilibrium (NE) \citep{zinkevich2008regret}.  
Furthermore,  
CFR bounds the original regret with a summation of many immediate counterfactual regrets, each of which corresponds to an infomation set (infoset).  
These immediate regrets are defined by counterfactual rewards and they can be iteratively minimized by online learning algorithms, e.g., regret matching (RM) \citep{blackwell1956analog} and Hedge \citep{freund1997decision}.  

Though CFR has succeeded in practice, the behavior of CFR is not fully understood. Specifically, experiments have shown that the regret is significantly smaller than the upper bound in \citep{zinkevich2008regret}. So at least  some further theoretical analysis can be provided on the regret bound of CFR. 
Besides, a more crucial limitation of CFR is that it requires traversing the whole game tree in each round, which is time-consuming  
in large-scale games. This is because we have to apply RM to every immediate regret in each round.  
Though various attempts have been made to avoid traversing the whole game tree in each round so that they can significantly speed up the vanilla CFR in practice, they are lack of theoretical guarantees on the running time or can even degenerate in the worst case~\citep{brown2015regret,brown2016reduced,lanctot2009monte}.

In this paper, we present a novel technique called lazy update, which provides a unified framework to avoid traversing the whole game tree in CFR. For each infoset, 
CFR with lazy update segments the time horizon into disjoint subsets with consecutive elements. We call these subsets as segments.  
And then CFR with lazy update updates the strategy only at the start of each segment and  keeps the strategy on that infoset the same within each segment. So that lazy update can save computation resources. It is noteworthy that our framework includes the vanilla CFR as a degenerated case, in which the length of each segment is $1$.  
Moreover, we present a novel  
analysis on the regret of CFR with lazy update. Our analysis is also based on the immediate regrets as in \citep{zinkevich2008regret}. The difference is that, in contrast to  \citep{zinkevich2008regret}'s analysis which takes each immediate regret independently, our analysis reveals the correlation among them via the underlying optimal strategy. Specifically, we prove that it is impossible that immediate regrets are all very large simultaneously. 
As an application of our analysis, we refine the regret bound 
of CFR from $O(|\mathcal{I}|\sqrt{T\log A})$ to $O(\sqrt{\xi DT\log A})$ \footnote{The upper bound in \citep{zinkevich2008regret} is $O(|\mathcal{I}|\sqrt{T A})$, since they use RM as the online learning solver. The regret bound can be reduced into $O(|\mathcal{I}|\sqrt{T\log A})$ by simply replacing RM with Hedge. } where $\mathcal{I}=\cup_{i}\mathcal{I}^i$, $\mathcal{I}^i$ is the infosets of player $i$, $A$ is the number of actions, $D$ is the depth of the game tree, $T$ is the  length of time and $\xi$ is a quantity reflecting the structure of the game tree, whose value varies from $D$ to $|\mathcal{I}|$, and in most cases $\xi\approx \sqrt{|\mathcal{I}|}$. We will define $\xi$ in Sec. \ref{sec:lazy-analysis}.


Obviously, in CFR with lazy update, we should balance the trade-off between running time and regret by selecting a suitable segmentation procedure. What surprising is that an extremely simple segmentation rule turned out to make a dramatic improvement. In our final algorithm, Lazy-CFR as shown in Alg. \ref{alg:adapt}, we simply update the strategy on an infoset by RM or Hedge if the cumulative reach probability on it becomes larger than a threshold. 
 And we further show that Lazy-CFR only needs to update the strategies on $O(\sqrt{|\mathcal{I}|})$ infosets in each round which is significantly smaller than $O(|\mathcal{I}|)$ in CFR, while the regret of Lazy-CFR can be controlled in $O(D\sqrt{\xi T\log A})$. Accordingly, Lazy-CFR requires running time $O(\frac{\sqrt{|\mathcal{I}|}\xi D^2\log A}{\epsilon^2})$ to compute an $\epsilon$-Nash Equilibrium, whilst the vanilla CFR needs $O(\frac{|\mathcal{I}|\xi D \log A}{\epsilon^2})$ running time. So that we accelerate CFR by a factor $O(\sqrt{|\mathcal{I}|}/D)$, which is a dramatic improvement in large scale games, since $D$ (the depth of the game tree) is usually in the order of $O(\log |\mathcal{I}|)$. 

We then analyze the regret lower bound. We show that no algorithm can achieve a regret lower than $\Omega(\sqrt{\xi T \log A})$ by constructing an explicit adversary. This means that the regrets of both CFR and Lazy-CFR are near-optimal within a factor of $O(\sqrt{D})$ and $O(D)$ respectively. 

We empirically evaluate our algorithm on the standard benchmark, Leduc Hold'em \citep{brown2015regret}. We compare with the vanilla CFR, MC-CFR \citep{lanctot2009monte}, and CFR+ \citep{bowling2017heads}. It is noteworthy that the same idea of Lazy-CFR can also be applied to CFR+, and we name the resulted algorithm Lazy-CFR+. 
 The analysis on Lazy-CFR can be directly applied to Lazy-CFR+. 
Experiments show that Lazy-CFR and Lazy-CFR+ dramatically improve the convergence rates of CFR and CFR+ in practice, respectively. 

The rest of this paper is organized as follows. Sec. \ref{sec:pre} reviews some useful preliminary knowledge of this work. In Sec. \ref{sec:lazy-analysis}, we present the idea of lazy update with the analysis, and our algorithm is presented in Sec. \ref{sec:alg}. After that, we present the regret lower bound in Sec. \ref{sec:lowerbound}. And then we discuss some related work in Sec. \ref{sec:related-work}. Finally, we show our experimental results in Sec. \ref{sec:exp}. 

\section{Notations and Preliminaries}\label{sec:pre}
We first introduce the notations and definitions of extensive games and TEGIs. Then we introduce an online learning concept of regret minimization. After that, we discuss the connection between TEGIs and regret minimization. This connection triggered the powerful algorithm, CFR. Finally, we finish this section by discussing the details of CFR.

\subsection{Extensive games}
Extensive games (see \citep{osborne1994course} page 200 for a formal definition.) compactly model the decision-making problems with sequential interactions among multiple agents. An extensive game can be represented by a game tree $H$ of histories, where a history is a  sequence of actions in the past. Suppose that there are $N$ players participating in an extensive game and let $c$ denote the chance player which is usually used to model the uncertainty in the environment. Let $[N]:=\{1,\cdots, N\}$. A player function $P$ is a mapping from  $H$ to $[N]\cup \{c\}$ such  that $P(h)$ is the player who takes an action after $h$.
And each player $i\in [N]$ receives a reward $u^i(h)\in[-1,1]$ at a terminal history $h$.

Let $\mathcal{A}(h)$ denote the set of valid actions of $P(h)$ after $h$, that is, $\forall a\in \mathcal{A}(h)$,  $(h,a)\in H$. Let $A=\max_{h}|\mathcal{A}(h)|$. A strategy of player $i$ is a function $\sigma^i$ which assigns $h$ a distribution over $\mathcal{A}(h)$ if $P(h)=i$. A strategy profile $\sigma$ consists of the strategy for each player, i.e., $\sigma^1,\cdots, \sigma^N$. We will use $\sigma^{-i}$ to refer to all the strategies in $\sigma$ except $\sigma^i$. And we use the pair $(\sigma^i, \sigma^{-i})$ to denote the full strategy profile. In games with imperfect information, actions of other players are partially observable to a player $i\in [N]$. So for player $i$, some different histories may not be distinguishable. Thus, the game tree can be partitioned into disjoint information sets (infoset). Let $\mathcal{I}^i$ denote the collection of player $i$'s infosets, we have that two histories $h,h'\in I \in \mathcal{I}^i$ are not distinguishable to player $i$. Thus, $\sigma^i$ should assign the same distribution over actions to all histories in an infoset $I\in \mathcal{I}^i$. With a little abuse of notations, we let $\sigma^i(I)$ denote the strategy of player $i$ on infoset $I\in \mathcal{I}^i$. 

Moreover, let $\pi_\sigma(h)$ denote the probability of arriving at a history $h$ if the players take actions according to strategy $\sigma$. Obviously, we can decompose $\pi_\sigma(h)$ into the product of each player's contribution, that is, $\pi_\sigma(h)=\prod_{[N]\cup \{c\}}\pi_{\sigma}^i(h)$. Similarly, we can define $\pi_\sigma(I)=\sum_{h\in I}\pi_{\sigma}(h)$ as the probability of arriving at an infoset $I$ and $\pi^i_\sigma(I)$ denote the corresponding contribution of player $i$.  Let $\pi_{\sigma}^{-i}(h)$ and  $\pi_{\sigma}^{-i}(I)$ denote the product of the contributions on arriving at $h$ and $I$, respectively, of all players except player $i$. 

In game theory, the \emph{solution} of a game is often referred to a \textbf{\emph{Nash equilibrium (NE)}} \citep{osborne1994course}. With a little abuse of notations, let $u^i(\sigma)$ denote the expectated reward of player $i$ if all players take actions according to $\sigma$. An NE is a strategy profile $\sigma^*$, in which every $\sigma^{*,i}$ is optimal if given $\sigma^{*,-i}$, that is, $\forall i \in [N]$, $u^i(\sigma^*)=\max_{\sigma^i} u^i((\sigma^i, \sigma^{*,-i}))$.

In this paper, we  concern on computing an approximation of an NE, 
 namely an $\mathbf\epsilon$\textbf{-NE} \citep{nisan2007algorithmic}, since computing an $\epsilon$-NE is usually faster in running time. An $\epsilon$-NE is a strategy profile $\sigma$ such that:
$$
    \forall i\in[N], u^i(\sigma)\geq \max_{\sigma'^{,i}}u^i((\sigma'^{,i}, \sigma^{-i})) - \epsilon.
$$

With the above notations, a two-player zero-sum extensive game with imperfect information (TEGI) is an extensive game with $N=2$ and $u^1(h)+u^2(h)=0$ for a terminal history $h$. And the $\epsilon$-NE in a TEGI can be efficiently computed by regret minimization, see later in this section.

\subsection{Regret minimization}
Now we introduce \textit{regret}, a core concept in online learning \citep{cesa2006prediction}. 
Many powerful online learning algorithms can be framed as minimizing some kinds of regret, therefore known as regret minimization algorithms. 
Generally, the regret is defined as follows:
 
\begin{Defn}[Regret]
 Consider the case where a player takes actions repeatedly. At each round, the player selects an action $w_t\in \Sigma$,\footnote{The valid action set $\Sigma$ is generalized, that is, the element in $\Sigma$ can be anything, e.g., a distribution or a vector in a Euclidean space.} where $\Sigma$ is the set of valid actions. At the same time, the environment \footnote{The environment may be an adversary in online learning.} selects a reward function $f_t$. Then, the overall reward of the player is $\sum_{t=1}^T f_t(w_t)$, and the regret is defined as:
 $$
    R_T=\max_{w'\in \Sigma}\sum_{t=1}^{T}f_t(w') - \sum_{t=1}^{T}f_t(w_t).
 $$
 \end{Defn}

One of the most famous 
example of online learning is \emph{online linear optimization} (OLO) in which $f_t$ is a linear function. If $\Sigma$ is the set of distributions over some discrete set  
, an OLO can be solved by standard regret minimization algorithms, e.g., regret matching (RM) \citep{blackwell1956analog,abernethy2011blackwell} or Hedge \citep{freund1997decision}. 

CFR employs RM or Hedge as a sub-procedure, so we summarize OLO, RM and Hedge as follows:
\begin{Defn}[Online linear optimization (OLO), regret matching (RM) and Hedge]\label{defn:rm}
Consider the online learning problem with linear rewards. In each round $t$, an agent plays a mixed strategy $w_t\in \Delta(\mathcal{A})$, where $\Delta(\mathcal{A})$ is the set of probabilities over the set $\mathcal{A}$, while an adversary selects a vector $c_t \in \mathbb{R}^{|\mathcal{A}|}$. The reward of the agent at this round is $\langle w_t, c_t\rangle$ where $\langle\cdot,\cdot \rangle$ denotes the operator of inner product. The goal of the agent is to maximize the cumulative reward which is equivalent to minimizing the following regret:
\begin{align*}
	R^{olo}_{T} = \max_{w\in \Delta(\mathcal{A})}\sum_{t=1}^{T}\langle w, c_t\rangle - \sum_{t=1}^{T} \langle w_t, c_t\rangle.
\end{align*}

Let $R^{olo}_{T,+}(a)=\max(0, \sum_{t=1}^{T}c_t(a) - \sum_{t=1}^{T} \langle w_t, c_t\rangle)$, RM picks $w_t$ as follows:
\begin{equation}
  w_{t+1}(a)=\begin{cases}
    \frac{R^{olo}_{t,+}(a)}{\sum_{a'}R^{olo}_{t,+}(a')}, &\max_{a'}R^{olo}_{t,+}(a')>0 .\\
    \frac{1}{|\mathcal{A}|}, & \text{otherwise}.
  \end{cases}
\end{equation}
According to the result in \citep{blackwell1956analog}, RM enjoys the following regret bound:\footnote{\citep{blackwell1956analog}'s result implicitly indicates  this regret bound. However, to the best of our knowledge, there's no existing work providing a regret bound in this form with a detailed proof. So we prove it in the Appendix \ref{app:proof-rm-upper}.}
\begin{align}\label{eq:rm_bound}
	R^{olo}_{T}\leq O\left(\sqrt{\sum_{t=1}^{T}\|c_t\|_2^2}\right).
\end{align}
Let $s_t(a)=\exp(\sum_{t'=1}^t c_{t'}(a))$, Hedge picks $w_t(a)=s_t(a)/(\sum_{a'}s_t(a')).$ 
According to \citep{freund1997decision}, Hedge enjoys the following regret bound:
\begin{align}\label{eq:rm_hedge}
    R^{olo}_T \leq O\left(\sqrt{\log |\mathcal{A}|\sum_{t=1}^{T}\max_{a\in \mathcal{A}}c_t^2(a)}\right).
\end{align}
\end{Defn}

\subsection{Counterfactual regret minimization (CFR)}
CFR is developed on a connection between $\epsilon$-NE and regret minimization. This connection is naturally established by considering repeatedly playing a TEGI as an online learning problem. It is worthy to note that there are two online learning problems in a TEGI, one for each player. 

Suppose player $i$ takes $\sigma^i_t$ at time step $t$ and let $\sigma_t=(\sigma_t^1, \sigma_t^2)$. Consider the online learning problem for player $i$ by setting $w_t:=\sigma^i_t$ and $f^i_t(\sigma^i):=u^i((\sigma^i, \sigma^{-i}_t))$.  The regret for player $i$ is $R_T^i:=\max_{\sigma}R_T^i(\sigma)$ where $R_T^i(\sigma):= \sum_{t=1}^{T}u^i((\sigma^i, \sigma_t^{-i})) - \sum_{t=1}^{T}u^i((\sigma_t^i, \sigma_t^{-i}))$.

Furthermore, define the time-averaged strategy, $\bar{\sigma}^i_T$, as follows:
$$
    \bar{\sigma}^i_T(I) = \frac{\sum_{t}\pi_{\sigma_t}^i(I)\sigma_{t}^i(I)}{\sum_{t}\pi_{\sigma_t}^i(I)}.
$$
It is well-known that \citep{nisan2007algorithmic}:
\begin{Lem}\label{lem:connection}
    If $\frac{1}{T}R_T^i\leq \epsilon/2$ for $i=1,2$, then $(\bar{\sigma}^1_{T}, \bar{\sigma}^2_T)$ is an $\epsilon$-NE.
\end{Lem}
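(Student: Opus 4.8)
The plan is to prove the standard folk theorem linking no-regret self-play to a Nash equilibrium, exploiting the two structural features available here: the game is zero-sum, so $u^2(\cdot)=-u^1(\cdot)$ on every profile, and the time-averaged strategy $\bar\sigma^i_T$ is defined with \emph{reach-probability weights}. Writing $\bar v:=\frac1T\sum_{t=1}^T u^1(\sigma_t)$ for the realized average value, I would translate the two hypotheses $R_T^i\le T\epsilon/2$ into one inequality bounding how far $\bar v$ lies below the best response against $\bar\sigma^2_T$ and another bounding how far it lies above the best response against $\bar\sigma^1_T$, and then sandwich $u^1(\bar\sigma^1_T,\bar\sigma^2_T)$ between these two bounds.

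The crux -- and the step I expect to be the real work -- is the following linearity property of the reach-weighted average: for any fixed strategy $\tau^1$ of player $1$,
\begin{equation}\label{eq:lin1}
\frac{1}{T}\sum_{t=1}^{T} u^1\big((\tau^1,\sigma_t^2)\big) \;=\; u^1\big((\tau^1,\bar\sigma^2_T)\big),
\end{equation}
together with the symmetric identity obtained by exchanging the two players. This holds because $u^1(\tau^1,\cdot)$, viewed as a function of player $2$'s strategy, is linear in player $2$'s \emph{realization plan} (the vector of sequence-form reach probabilities), and the formula $\bar\sigma^2_T(I)=\frac{\sum_t\pi^2_{\sigma_t}(I)\sigma_t^2(I)}{\sum_t\pi^2_{\sigma_t}(I)}$ in the statement is precisely the behavioral strategy whose realization plan equals the arithmetic mean of the realization plans of $\sigma_1^2,\dots,\sigma_T^2$. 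Establishing \eqref{eq:lin1} carefully -- verifying, by induction down the game tree, that the reach-weighted behavioral mean reconstructs the averaged realization plan at every infoset -- is the one place where the specific definition of $\bar\sigma_T$ and the multilinearity of extensive-form payoffs are indispensable; everything afterward is bookkeeping with the zero-sum identity.

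Granting \eqref{eq:lin1}, I would finish as follows. Dividing player $1$'s regret by $T$ and applying \eqref{eq:lin1} gives $\max_{\tau^1}u^1\big((\tau^1,\bar\sigma^2_T)\big)-\bar v\le \epsilon/2$. For player $2$, substituting $u^2=-u^1$ rewrites $R_T^2\le T\epsilon/2$ as $\bar v-\min_{\tau^2}u^1\big((\bar\sigma^1_T,\tau^2)\big)\le\epsilon/2$, after the symmetric form of \eqref{eq:lin1}. Combining these with the trivial chain
\begin{equation}\label{eq:chain}
\min_{\tau^2}u^1\big((\bar\sigma^1_T,\tau^2)\big)\;\le\; u^1\big((\bar\sigma^1_T,\bar\sigma^2_T)\big)\;\le\;\max_{\tau^1}u^1\big((\tau^1,\bar\sigma^2_T)\big)
\end{equation}
places $u^1(\bar\sigma^1_T,\bar\sigma^2_T)$ within $\epsilon/2$ of $\bar v$ from both sides, so player $1$ can gain at most $\epsilon$ by deviating; negating through $u^2=-u^1$ turns the lower end of \eqref{eq:chain} into player $2$'s $\epsilon$-NE inequality. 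Hence $(\bar\sigma^1_T,\bar\sigma^2_T)$ is an $\epsilon$-NE. The only genuine subtlety is \eqref{eq:lin1}; the remaining manipulations are elementary.
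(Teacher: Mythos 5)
Your proof is correct. The paper offers no proof of this lemma at all---it simply cites the result as well known from \citep{nisan2007algorithmic}---and your argument (the reach-weighted behavioral average has realization plan equal to the arithmetic mean of the realization plans of $\sigma^2_1,\dots,\sigma^2_T$, so payoffs against $\bar\sigma^2_T$ equal time-averaged payoffs, combined with the zero-sum sandwich between the two regret bounds) is precisely the standard folk-theorem argument that the citation refers to; the only implicit hypothesis your key linearity identity needs, perfect recall of the extensive game, is also assumed throughout the CFR literature, so there is no gap.
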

However, it is hard to directly apply regret minimization  algorithms to TEGIs, since the reward function $u$ is non-convex with respect to $\sigma$. 
One approach is that as in \citep{gordon2007no}, we first transform a TEGI to a normal-form game, and then apply the Lagragian-Hedge algorithm \citep{gordon2005no}.
However, this approach is time-consuming since the dimension of the corresponding normal-form game is exponential to $|\mathcal{I}|$. 
To address this problem, \citet{zinkevich2008regret} propose a novel decomposition of the regret $R_{T}^i$ into the summation of immediate regrets as \footnote{\citet{zinkevich2008regret} directly upper bounded $R_T^i$     by the counterfactual regret, i.e., Eq. \eqref{eq:cfr-upper-bound}, and   omitted the derivation of Eq. \eqref{eq:regret-overall}. So we present the derivation of Eq. \eqref{eq:regret-overall} in Appendix \ref{app:proof-reg-upper}.}:
\begin{align}\label{eq:regret-overall}
 & R_T^{i}(\sigma) \nonumber\\
=& \sum_{t} \sum_{I\in \mathcal{I}^i, P(I)=i}\pi_{\sigma}^i(I)\pi_{\sigma_t}^{-i}(I) (u^i(\sigma_t|_{I\rightarrow \sigma(I)}, I) - u^i(\sigma_t, I))
\end{align}

where $\sigma|_{I\rightarrow \sigma'(I)}$ denotes the strategy generated by modifying $\sigma(I)$ to $\sigma'(I)$  and $u^i(\sigma, I)$ denote the reward of player $i$ conditioned on arriving at the infoset $I$ if the strategy $\sigma$ is executed.

Further, \citet{zinkevich2008regret} upper bound Eq. (\ref{eq:regret-overall}) by the \textbf{counterfactual regret}:
\begin{align}
 &R_T^{i}(\sigma) \nonumber\\
\leq& \sum_{I\in \mathcal{I}^i, P(I)=i} \underbrace{\left( \sum_{t}\pi_{\sigma_t}^{-i}(I) (u^i(\sigma_t|_{I\rightarrow \sigma(I)}, I) - u^i(\sigma_t, I))\right)}_{\text{\large The OLO for each infoset} }\label{eq:cfr-upper-bound}
\end{align}
 For convenience, we call  $\pi_{\sigma_t}^{-i}(I) u^i(\sigma_t|_{I\rightarrow a}, I)$ the \textbf{counterfactual reward} of action $a$ at round $t$. 

Notice that Eq. (\ref{eq:cfr-upper-bound}) essentially decomposes the regret minimization of a TEGI into $O(|\mathcal{I}|)$ OLOs. So that, in each round, we can apply RM  directly to each individual OLO to minimize the counterfactual regret. And the original regret $\max_\sigma R^i_T(\sigma)$ is also minimized since the counterfactual regret is an upper bound. So that,  with Eq. \eqref{eq:rm_bound}, Eq.  \eqref{eq:cfr-upper-bound} and the fact that the norm of a conterfactual reward vector is at most $O(\sqrt{A})$, we can upper bound the counterfactual regret by $O(|\mathcal{I}|\sqrt{AT})$. However, we have to traverse the whole game tree, which is very time-consuming in large scale games.

In the sequel, we are going to show that updating the strategy on every infoset is not indispensable. Intuitively, this is because the regret is determined by the norm of the vector of counterfactual reward on each node (see Eq. (\ref{eq:rm_bound})). And on most nodes, the corresponding norm is very small, since $\pi_{\sigma_t}^{-i}$ is a probability. 

\section{Lazy update and regret upper bound}\label{sec:lazy-analysis}
In this section, we present the idea of lazy update. We first discuss lazy update in the context of OLO. And then we leverage the idea of lazy update to extensive games. After that, we provide our analysis on the regret bound of CFR with lazy update in Sec. \ref{sec:lazy-extensive}. 
Our analysis is novel since it reveals the correlation among  immediate regrets and encodes the structure of the game tree explicitly. The regret bound is presented in our main theorem, Thm \ref{thm:main-upp-reg}. Furthermore, Thm \ref{thm:main-upp-reg} can also be used to analyze the regret bound of CFR. Thus, by applying Thm \ref{thm:main-upp-reg}, we refined the regret bound of the vanilla CFR. 
 
 The idea of lazy update and its analysis do not depend on the choice of OLO solver. So that in our demonstration of lazy update, we assume OLOs are solved by RM.  And similarly, we only present the detailed proof with RM as the OLO solver. Proof of the variant employing Hedge as the OLO solver can be simply derived by substituting Eq. \eqref{eq:rm_bound} with Eq.\eqref{eq:rm_hedge} that can achieve an $O(\sqrt{\log A})$ upper bound on $A$ rather than $O(\sqrt{A})$.
\subsection{Lazy update for OLOs}\label{sec:lazy-olo}
We now introduce lazy update for OLOs in Defn. \ref{defn:rm}, see Fig. \ref{fig:lazy-illustration} for an illustration.  
We call an online learning algorithm for OLOs as a lazy update algorithm if:
\begin{itemize}
    \item It divides time steps $[T]$ into $n$ disjoint  subsets with consecutive elements, that is,  $\{t_{i}, t_{i}+1,\cdots, t_{i+1}-1\}_{i=1}^{n}$ where $1=t_1< t_2\cdots < t_{n+1} = T+1$. For convenience, we call these subsets as segments. 
    \item It updates $w_t$ at time steps $t=t_{i}$  for some $i$ and keeps $w_t$ the same within each segment. That is, the OLO with $T$ steps collapses into a new OLO with $n$ steps. And we have $c'_j = \sum_{t=t_j}^{t_{j+1}-1}c_t$ where $c_t$ is the vector selected by the adversary in the original OLO at time step $t$ and $c'_j$ is the vector selected by the adversary in the collapsed OLO at time step $j$.
\end{itemize}
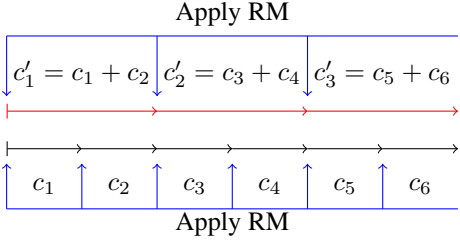
\begin{figure}
    \centering
    \begin{tikzpicture}
        \draw [|->] (0, 1)--(1, 1); \node at (0.5, 0.5) {$c_1$};
        \draw [->] (1, 1)--(2, 1);  \node at (1.5, 0.5) {$c_2$};
        \draw [->] (2, 1)--(3, 1);  \node at (2.5, 0.5) {$c_3$};
        \draw [->] (3, 1)--(4, 1); \node at (3.5, 0.5) {$c_4$};
        \draw [->] (4, 1)--(5, 1); \node at (4.5, 0.5) {$c_5$};
        \draw [->] (5, 1)--(6, 1); \node at (5.5, 0.5) {$c_6$};
        \draw [blue] (0,0.2)--(6,0.2);
        \node at (3, 0) {Apply RM};
        \draw [blue, ->] (0, 0.2)--(0,0.8);
        \draw [blue, ->] (1, 0.2)--(1,0.8);
        \draw [blue, ->] (2, 0.2)--(2,0.8);
        \draw [blue, ->] (3, 0.2)--(3,0.8);
        \draw [blue, ->] (4, 0.2)--(4,0.8);
        \draw [blue, ->] (5, 0.2)--(5,0.8);
        
        \draw [red, |->] (0, 1.5)--(2, 1.5);
        \node at (1, 2) {$c'_1=c_1+c_2$};
        \draw [red, ->] (2, 1.5)--(4, 1.5);
        \node at (3, 2) {$c'_2=c_3+c_4$};
        \draw [red, ->] (4, 1.5)--(6, 1.5);
        \node at (5, 2) {$c'_3=c_5+c_6$};
        \draw [blue] (0, 2.5) -- (6, 2.5);
        \node at (3, 2.8) {Apply RM};
        \draw [blue, ->] (0, 2.5) -- (0, 1.7);
        \draw [blue, ->] (2, 2.5) -- (2, 1.7);
        \draw [blue, ->] (4, 2.5) -- (4, 1.7);
    \end{tikzpicture}
    \caption{An illustration on RM with lazy update for OLOs.  On the bottom is the standard RM; on the top is the RM with lazy update. The lengths of time in the original OLO and the collapsed OLO are $6$ and $3$ respectively. Suppose $\|c_1\|_2=\|c_3\|_2=\|c_5\|_2=1$ and $\|c_2\|_2=\|c_4\|_2=\|c_6\|_2=0.01$. Then the regrets are almost the same, since $\sum_{t=1}^{6}\|c_t\|_2^2=3.0003$ and $\sum_{t=1}^3\|c'_t\|_2^2\leq 3\times 1.01^2\approx 3.03$.}
    \label{fig:lazy-illustration}
\end{figure}

According to Eq. \eqref{eq:rm_bound}, the regret of RM with lazy update is bounded by:
\begin{align}
	O\left(\sqrt{\sum_{i=1}^{n}  \left\|c'_i:=\sum_{t=t_{i}}^{t_{i+1}-1}c_t\right\|_2^2 }\right)\label{eq:regret-lazy}
\end{align}
RM with lazy update does not need to update the strategy at each round. And if the segmentation is reasonable, that is, $\sum_{i=1}^{n}\|c'_i\|^2\approx \sum_{j=1}^{T}\|c_j\|^2$, then  the regrets of the lazy update RM and the vanilla RM are similar in amount.

It is noteworthy that in OLOs, the running time of lazy update RM is still $O(AT)$ which is the same as applying RM directly, where $A$ is the dimension of $c_t$. This is because we have to compute $\sum_{t=t_i+1}^{t_{i+1}}c_t$ which is time-consuming. Fortunately, this problem can be addressed in TEGIs, see Sec. \ref{sec:alg} for how to overcome it by exploiting the structure of the game tree.

\subsection{Lazy update for TEGIs}\label{sec:lazy-extensive}
 We now extend the idea of lazy update to TEGIs. 
According to Eq. (\ref{eq:cfr-upper-bound}), the regret minimization procedure can be divided into $|\mathcal{I}|$   OLOs, one for each infoset. 
For convenience, for each infoset $I\in \mathcal{I}^i$, we divide the time steps $[T]$ into $n(I)$ segments $\{t_j(I), \cdots, t_{j+1}(I)-1\}_{j=1}^{n(I)}$ where $1=t_1(I)<t_2(I)\cdots <t_{n(I)+1}(I)=T+1$. Let $r_j(I,a) = \sum_{t=t_{j}(I)}^{t_{j+1}(I)-1}\pi^{-i}_{\sigma_t}(I)u^i(\sigma_t|_{I\rightarrow a}, I)$ denote the summation of the counterfactual rewards over a segment. And let $r_j(I)=[r_j(I,a)]_{a\in \mathcal{A}(I)}$ denote the vector consisting of $r_j(I, a), a\in \mathcal{A}(I)$. Similar to lazy update for OLOs,  we only update the strategy on infoset $I$ at $t_j(I)$ according to RM. Let $\sigma'_j(I)$ denote the strategy after the $j$-th update on infoset $I$, that is,  $\sigma_t(I):=\sigma'_{j}(I)$ for $t\in \{t_{j}(I),t_{j}(I)+1, \cdots, t_{j+1}(I)-1\}$. According to Eq. (\ref{eq:regret-lazy}), the regret of applying RM to the collapsed OLO on infoset $I$ can be bounded as: 
\begin{align*}
	R^{lazy}_T(I):=\max_{\sigma\in \Sigma(I)}\sum_{j=1}^{n(I)}\langle \sigma - \sigma'_j(I),  r_{j}(I) \rangle
    \leq  \sqrt{\sum_{j=1}^{n(I)}\|r_j(I)\|^2}
\end{align*}

Now we analyze the regret upper bound  on the overall regret of a TEGI, i.e., Eq. (\ref{eq:regret-overall}), of the above lazy update algorithm. The upper bound is presented in our main theorem, Thm. \ref{thm:main-upp-reg}. 

\begin{Thm}\label{thm:main-upp-reg}
The regret of CFR with lazy update can be bounded  as follows:
	\begin{eqnarray}
		 R_T^i(\sigma)&=O\left(
		 \sqrt{
		 \left(\sum_{I',t}\pi^i_{\sigma}(I')\pi^{-i}_{\sigma_t}(I')\right)\eta(\sigma)}
		 \right)\label{eq:tight-bound-raw}\\
		&\leq \sqrt{DT \eta(\sigma) }\label{eq:tight-bound}
	\end{eqnarray}
	where $\eta(\sigma):=\sum_{I\in \mathcal{I}^i} \pi_{\sigma^i}^{i}(I)\frac{\sum_{j=1}^{n(I)}f(r_j(I))}{\sum_{t=1}^{T}\pi_{\sigma_t}^{-i}(I)}$ where $f(r_j(I))=\|r_j(I)\|^2$ for CFR with RM and $f(r_j(I))=\max_{a}|r_j(I)(a)|^2\log A $ for CFR with Hedge.
    \begin{proof}[Sketch of proof] We defer the proof to  Appendix \ref{app:proof-reg-upper}. Our analysis is also on the immediate regrets as in \citep{zinkevich2008regret} and the improvements are on following two aspects:

    	     First, instead of providing an upper bound on the counterfactual regret in Eq. (\ref{eq:cfr-upper-bound}), we directly analyze the bound of the original regret Eq. (\ref{eq:regret-overall}). This makes us to be able to analyze $\pi^i_{\sigma}(I)$'s effect on $R_T^i(\sigma)$. To see how this improves the regret bound intuitively, consider the case that 
    	    $R^{lazy}_T(I)$ is large on an infoset $I$. Though $R^{lazy}_T(I)$ is large, it makes $R_T^i(\sigma)$ increasing dramatically only if $\pi_\sigma^i(I)$ is also large. This is because the immediate regret of infoset $I$  is $\pi_{\sigma}^i(I)R^{lazy}_T(I)$ and $R_T^i(\sigma)$ is the summation of immediate regrets. Moreover, it is impossible that $\pi_{\sigma}^i(I)$ is very large on all infosets, since $\pi_{\sigma}^i(I)$ is the probability of arriving at $I$ contributed by player $i$
    	    . So that the immediate regrets cannot be very large at the same time.

    	    Second, we upper bound the regret by quantities ($\sum_{I,t}\pi^{i}_{\sigma}(I')\pi_{\sigma_t}^{-i}(I')$ and $\eta (\sigma)$ in Eq.  (\ref{eq:tight-bound-raw})) which can reflect the structure of the underlying game tree. So that we can give a more detailed analysis on these quantities, which leads to a tighter regret bound.
    \end{proof}
\end{Thm}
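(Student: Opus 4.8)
The plan is to keep the reach probability $\pi^i_\sigma(I)$ as an explicit weight throughout, rather than discarding it as in the counterfactual-regret bound of Eq.~\eqref{eq:cfr-upper-bound}, and then to convert the resulting weighted sum of per-infoset lazy regrets into the product form of Eq.~\eqref{eq:tight-bound-raw} by one application of Cauchy--Schwarz. Concretely, I would start from the exact decomposition Eq.~\eqref{eq:regret-overall}, fix an infoset $I$, and regroup the inner sum over $t$ according to its segments $\{t_j(I),\dots,t_{j+1}(I)-1\}$. Since $\sigma_t(I)=\sigma'_j(I)$ is constant on each segment and $u^i(\sigma_t|_{I\to\sigma(I)},I)-u^i(\sigma_t,I)=\langle\sigma(I)-\sigma_t(I),\,[u^i(\sigma_t|_{I\to a},I)]_a\rangle$ is linear in the strategy at $I$, the per-segment contribution collapses to $\langle\sigma(I)-\sigma'_j(I),\,r_j(I)\rangle$ with $r_j(I)$ exactly the segment-summed counterfactual-reward vector of Sec.~\ref{sec:lazy-extensive}. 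Summing over $j$ and using $\pi^i_\sigma(I)=\pi^i_{\sigma^i}(I)$ gives $R^i_T(\sigma)\le\sum_I\pi^i_{\sigma^i}(I)\,R^{lazy}_T(I)$, and the collapsed-OLO bound $R^{lazy}_T(I)\le\sqrt{\sum_j f(r_j(I))}$ (Eq.~\eqref{eq:rm_bound} for RM, Eq.~\eqref{eq:rm_hedge} for Hedge) then yields $R^i_T(\sigma)\le\sum_I\pi^i_{\sigma^i}(I)\sqrt{\sum_{j=1}^{n(I)}f(r_j(I))}$.

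To obtain Eq.~\eqref{eq:tight-bound-raw} I would write each summand as a product and apply Cauchy--Schwarz. Setting $a_I:=\pi^i_{\sigma^i}(I)$, $b_I:=\sum_j f(r_j(I))$ and $c_I:=\sum_t\pi^{-i}_{\sigma_t}(I)$, the identity $a_I\sqrt{b_I}=\sqrt{a_I c_I}\cdot\sqrt{a_I b_I/c_I}$ gives $\sum_I a_I\sqrt{b_I}\le\sqrt{(\sum_I a_I c_I)(\sum_I a_I b_I/c_I)}$, where the first factor equals $\sum_{I',t}\pi^i_\sigma(I')\pi^{-i}_{\sigma_t}(I')$ and the second is precisely $\eta(\sigma)$ (infosets with $c_I=0$ contribute nothing and are dropped). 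This is exactly the step that beats~\citep{zinkevich2008regret}: the weight $\pi^i_{\sigma^i}(I)$ is traded against the normalizer $\sum_t\pi^{-i}_{\sigma_t}(I)$, so neither can be large on too many infosets simultaneously.

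Finally, passing from Eq.~\eqref{eq:tight-bound-raw} to Eq.~\eqref{eq:tight-bound} reduces to showing $\sum_{I',t}\pi^i_\sigma(I')\pi^{-i}_{\sigma_t}(I')\le DT$. For each fixed $t$ I would recognize $\pi^i_\sigma(I')\pi^{-i}_{\sigma_t}(I')=\pi_{\hat\sigma}(I')$ as the reach probability of $I'$ under the hybrid profile $\hat\sigma=(\sigma^i,\sigma_t^{-i})$; grouping infosets by depth, the reach probabilities over any single level sum to at most $1$ because probability mass is conserved as it splits among children, and there are at most $D$ levels, so the inner sum is at most $D$ and summing over the $T$ rounds closes the argument. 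I expect the main obstacle to be the regrouping in the first paragraph: I must verify carefully that keeping $\pi^i_\sigma(I)$ attached to the comparator and collapsing each segment reproduces precisely $r_j(I)$ and the lazy regret $R^{lazy}_T(I)$, since the entire chain downstream rests on this exact factorization, whereas the Cauchy--Schwarz split and the level-sum bound are short once the weighting is fixed.
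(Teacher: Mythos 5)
Your proposal is correct and takes essentially the same route as the paper's own proof: the exact decomposition \eqref{eq:regret-overall}, the per-infoset collapsed-OLO bound with the weight $\pi^i_{\sigma}(I)$ retained, the split of $\sum_I \pi^i_{\sigma}(I)\sqrt{\sum_j f(r_j(I))}$ into $\sqrt{\bigl(\sum_I a_I c_I\bigr)\bigl(\sum_I a_I b_I/c_I\bigr)}$ (the paper derives this via Jensen's inequality over the measure proportional to $\pi^i_{\sigma}(I)\sum_t\pi^{-i}_{\sigma_t}(I)$, which is the same inequality as your Cauchy--Schwarz step), and finally $\sum_{I',t}\pi^i_{\sigma}(I')\pi^{-i}_{\sigma_t}(I')\leq DT$. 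The only substantive difference is that you supply the depth-by-depth conservation argument for this last fact, which the paper merely asserts.
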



 According to Thm \ref{thm:main-upp-reg}, we can bound the regret by bounding $\max_{\sigma}\eta(\sigma)$. In the sequel of this paper, we upper bound $\eta(\sigma)$ using the following inequalities:

 \begin{Lem}
    Let $\mathcal{G}(\sigma) = \xi^i \max_{I,j}(\sum_{t=t_j(I)+1}^{t_{j+1}}\pi^{-i}_{\sigma_t}(I))$, for CFR with RM, we have: 
    \begin{align}
        \eta(\sigma)
        \leq A\mathcal{G}(\sigma)\label{eq:ratio-upper-bound}
    \end{align} 
    for CFR with Hedge, we have:
    \begin{align*}
        \eta(\sigma)
        \leq \mathcal{G}(\sigma)\log A
    \end{align*}
where $\xi^i = \max_{\sigma}\sum_{I\in \mathcal{I}^i, P(I)=i}\pi_{\sigma}^i(I)$, which is significantly smaller than $|\mathcal{I}|$ since $\pi_\sigma^i(I)$ is a probability. For convenience, let $\xi = \max(\xi^1, \xi^2)$.
 \end{Lem}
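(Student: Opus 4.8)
The plan is to bound $\eta(\sigma)$ infoset by infoset, reducing every quantity to the per-segment opponent reach probability. For each infoset $I$ with $P(I)=i$ and each segment $j$, write $P_j(I) := \sum_{t=t_j(I)}^{t_{j+1}(I)-1}\pi^{-i}_{\sigma_t}(I)$ for the total reach contributed by the opponents over that segment; this is exactly the per-segment mass appearing inside the $\max$ in $\mathcal{G}(\sigma)$. The first step is to note that, since every conditional reward satisfies $|u^i(\sigma_t|_{I\rightarrow a},I)|\le 1$, each coordinate of the segment reward vector obeys $|r_j(I,a)| = \big|\sum_{t=t_j(I)}^{t_{j+1}(I)-1}\pi^{-i}_{\sigma_t}(I)\,u^i(\sigma_t|_{I\rightarrow a},I)\big|\le P_j(I)$. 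Consequently $\|r_j(I)\|_2^2 \le A\,P_j(I)^2$ in the RM case and $\max_a |r_j(I,a)|^2 \le P_j(I)^2$ in the Hedge case, so $f(r_j(I)) \le A\,P_j(I)^2$ and $f(r_j(I)) \le P_j(I)^2\log A$ respectively.

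The second step converts the resulting sum of squares into a product of a max and a sum. Summing over segments gives $\sum_{j} P_j(I)^2 \le \big(\max_j P_j(I)\big)\sum_j P_j(I)$, and since the segments partition $[T]$ we have $\sum_j P_j(I) = \sum_{t=1}^T \pi^{-i}_{\sigma_t}(I)$. This is the crucial cancellation: the factor $\sum_t \pi^{-i}_{\sigma_t}(I)$ matches exactly the denominator in the definition of $\eta(\sigma)$. Hence for RM, $\frac{\sum_j f(r_j(I))}{\sum_t \pi^{-i}_{\sigma_t}(I)} \le A\max_j P_j(I) \le A\max_{I',j}P_j(I')$, and analogously the Hedge ratio is bounded by $(\log A)\max_{I',j}P_j(I')$. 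Infosets that are never reached give a degenerate $0/0$ which we read as $0$, consistent with $r_j(I)=0$ there.

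The final step sums over infosets and invokes the definition of $\xi^i$. Pulling the infoset-independent factor out yields $\eta(\sigma) \le A\big(\max_{I',j}P_j(I')\big)\sum_{I\in\mathcal{I}^i,\,P(I)=i}\pi^i_{\sigma^i}(I)$ for RM, and since $\sum_{I\in\mathcal{I}^i,\,P(I)=i}\pi^i_{\sigma}(I)\le \xi^i$ by definition of $\xi^i$, we conclude $\eta(\sigma) \le A\,\xi^i\max_{I',j}P_j(I') = A\,\mathcal{G}(\sigma)$; the Hedge version replaces the leading $A$ by $\log A$. I expect the only real subtlety to be bookkeeping rather than mathematics: confirming that the sum in $\eta$ runs over acting infosets ($P(I)=i$) so that the bound $\sum_I \pi^i_{\sigma}(I)\le\xi^i$ applies, checking that it is $\pi^i_{\sigma^i}(I)$ (the optimal-strategy reach, not $\pi^i_{\sigma_t}(I)$) being summed, and reconciling the slight off-by-one between the segment endpoints used in $r_j(I)$ and those written inside $\mathcal{G}(\sigma)$. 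None of these affect the chain of three inequalities above, which is otherwise direct.
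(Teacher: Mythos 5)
Your proof is correct and follows essentially the same route as the paper's own derivation: bounding each coordinate of $r_j(I)$ by the per-segment opponent reach mass, converting $\sum_j P_j(I)^2$ into $\bigl(\max_j P_j(I)\bigr)\sum_j P_j(I)$ so the denominator $\sum_t \pi^{-i}_{\sigma_t}(I)$ cancels, and then summing over infosets against the definition of $\xi^i$. The bookkeeping points you flag (the $P(I)=i$ restriction, the $0/0$ convention, and the off-by-one in segment endpoints) are indeed the only discrepancies, and they are harmless.
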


\begin{proof}[Derivation of Eq. (\ref{eq:ratio-upper-bound})]
With straight-forward computations, we have:
    \begin{align}
        \eta(\sigma)&\leq A \sum_{I\in \mathcal{I}^i}\pi^i_{\sigma^i}(I)\frac{\sum_{j=1}^{n(I)}\|\sum_{t=t_j(I)+1}^{t_{j+1}(I)}\pi^{-i}_{\sigma_t}(I)\|^2}{\sum_t \pi^{-i}_{\sigma_t}(I)}\nonumber\\
        &\leq A\sum_{I\in \mathcal{I}^i}\pi^i_{\sigma^i}(I) \max_{j}(\sum_{t=t_j(I)+1}^{t_{j+1}}\pi^{-i}_{\sigma_t}(I))\nonumber\\
        &\leq A\max_{I,j}\left(\sum_{t=t_j(I)+1}^{t_{j+1}}\pi^{-i}_{\sigma_t}(I)\right)   \left(\sum_{I\in \mathcal{I}^i} \pi_{\sigma}^i(I)\right) \nonumber\\
        &\leq O(\xi^i A) \max_{I,j}\left(\sum_{t=t_j(I)+1}^{t_{j+1}}\pi^{-i}_{\sigma_t}(I)\right)\nonumber
    \end{align}
\end{proof}

\emph{\textbf{A tighter regret bound of CFR}}: 
It is easy to see that the vanilla CFR is a special case of lazy update, in which $t_j(I)+1=t_{j+1}(I)$ for every $j, I$. 
So we can apply Thm \ref{thm:main-upp-reg} and Eq. (\ref{eq:ratio-upper-bound}) to CFR directly, which leads to a tighter regret bound as in the following lemma. 

\begin{Lem}\label{lem:tighter}
	With RM, the regret of the vanilla CFR is bounded by $O(\sqrt{\xi DAT})$ . With Hedge, the regret of the vanilla is $O(\sqrt{\xi D T\log A})$.
    \begin{proof}
    	We only need to bound $\max_{\sigma}\eta(\sigma)$ and then insert it into Thm \ref{thm:main-upp-reg}. By directly applying Eq. (\ref{eq:ratio-upper-bound}) and the fact that $\pi_{\sigma_t}^{-i}(I)\leq 1$, we have $\eta(\sigma)\leq O(A\xi^i)$.
    \end{proof}
\end{Lem}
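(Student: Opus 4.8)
The plan is to treat the vanilla CFR as the length-one segmentation within the lazy-update framework and then feed the resulting $\eta(\sigma)$ straight into Thm.~\ref{thm:main-upp-reg}. Concretely, in vanilla CFR every segment on every infoset is a single time step, i.e.\ $t_{j+1}(I)=t_j(I)+1$, so the block sum $\sum_{t=t_j(I)+1}^{t_{j+1}(I)}\pi^{-i}_{\sigma_t}(I)$ that appears inside $\mathcal{G}(\sigma)$ degenerates to a single counterfactual reach probability $\pi^{-i}_{\sigma_t}(I)$. The whole task therefore reduces to controlling $\max_\sigma \eta(\sigma)$, after which the bound $R_T^i(\sigma)\le \sqrt{DT\,\eta(\sigma)}$ from Thm.~\ref{thm:main-upp-reg} supplies the rest.

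First I would invoke Eq.~\eqref{eq:ratio-upper-bound}, which already gives $\eta(\sigma)\le A\,\mathcal{G}(\sigma)$ for RM and $\eta(\sigma)\le \mathcal{G}(\sigma)\log A$ for Hedge. With $\mathcal{G}(\sigma)=\xi^i\max_{I,j}\big(\sum_{t=t_j(I)+1}^{t_{j+1}}\pi^{-i}_{\sigma_t}(I)\big)$ and the segment sum collapsed to one term, all I need is the elementary fact that a single counterfactual reach probability satisfies $\pi^{-i}_{\sigma_t}(I)\le 1$ (it is a sum of reach contributions over the mutually exclusive histories inside $I$). Hence $\max_{I,j}(\cdots)\le 1$, so $\mathcal{G}(\sigma)\le \xi^i$ and therefore $\eta(\sigma)\le A\xi^i$ for RM and $\eta(\sigma)\le \xi^i\log A$ for Hedge, uniformly in $\sigma$.

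Finally I would substitute these uniform bounds into Thm.~\ref{thm:main-upp-reg}. For RM this yields $R_T^i(\sigma)\le \sqrt{DT\cdot A\xi^i}=O(\sqrt{\xi DAT})$, and for Hedge $R_T^i(\sigma)\le \sqrt{DT\cdot \xi^i\log A}=O(\sqrt{\xi DT\log A})$, using $\xi=\max(\xi^1,\xi^2)$ to absorb the player index. I do not expect a genuine obstacle here: the heavy lifting---the correlated-regret decomposition and the depth factor $D$---is entirely inside Thm.~\ref{thm:main-upp-reg} and the $\mathcal{G}$ lemma, both of which I am allowed to assume. The only point that deserves a moment's care is confirming that the length-one segmentation is an admissible instance of lazy update (so that the preceding lemmas apply verbatim) and that the per-step bound $\pi^{-i}_{\sigma_t}(I)\le 1$ is precisely what the $\max_{I,j}$ extracts once the segments are singletons; everything else is a direct substitution.
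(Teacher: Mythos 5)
Your proposal is correct and follows exactly the paper's own route: view vanilla CFR as the singleton-segment instance of lazy update, bound $\mathcal{G}(\sigma)\le\xi^i$ via $\pi^{-i}_{\sigma_t}(I)\le 1$, apply Eq.~(\ref{eq:ratio-upper-bound}) to get $\eta(\sigma)\le O(A\xi^i)$ (resp.\ $O(\xi^i\log A)$ for Hedge), and substitute into Thm~\ref{thm:main-upp-reg}. Your write-up is in fact more explicit than the paper's terse proof, but there is no substantive difference.
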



 \section{Lazy-CFR}\label{sec:alg}
 
\begin{algorithm}[t]
\caption{Lazy-CFR}
\label{alg:adapt}  
\begin{algorithmic}[1]
\STATE A two-player zero-sum extensive game.
\STATE Initialize the reward vector $r(I)$ for all $I\in \mathcal{I}^i$
\WHILE {$t<T$}
    \FORALL{$i\in\{1, 2\}$}
	    \STATE $Q=\{I_r\}$ where $I_r$ is the root of the infosets tree.
        \WHILE{$Q$ is not empty.}
    	    \STATE Pop $I$ from $Q$.
            \STATE Update the strategy on $I$ via RM or Hedge.
            \STATE For $I'\in succ(I)$, if $m_t(I')\geq 1$, push $I'$ into $Q$.
        \ENDWHILE
    \ENDFOR 
    \FORALL{$I$}
        \STATE Update the reward vector on $I$ if any player's strategy on infosets below $I$ has been modified.
    \ENDFOR 
\ENDWHILE
\end{algorithmic}
\end{algorithm}
 In this section, we discuss how to design an efficient variant of CFR with the framework of lazy update. 
 
 Intuitively, an efficient $\epsilon$-NE solver for TEGIs, which is based on minimizing the regret of the OLO on each infoset,  should satisfy the following two conditions. The first one is to prevent the overall regret from growing too fast. And according to Thm \ref{thm:main-upp-reg}, we only need to make $\eta(\sigma')$ to be small for all $\sigma'$. Furthermore, this can be done by making $\|\sum_{t=t_j(I)+1}^{t_{j+1}}\pi^{-i}_{\sigma_t}(I)\|$ small for all $I, j$ in the framework of lazy update. The second condition is to update as small number of infosets as possible during a round, which is equivalent to make $\sum_{I\in \mathcal{I}^i}n(I)$ small. 

  The key to balance the tradeoff between $n(I)$ and $\max_{j}\|\sum_{t=t_{j}(I)+1}^{t_{j+1}(I)}\pi_{\sigma_t}^{-i}(I)\|$ is that  $\max_{\sigma}\sum_{I\in \mathcal{I}^i}\pi_{\sigma}^{-i}(I)$ is significantly smaller than $|\mathcal{I}^i|$. 
  To make a clean upper bound on $\max_{\sigma}\sum_{I\in \mathcal{I}^i}\pi_{\sigma}^{-i}(I)$, we make the following mild assumption: 
\begin{Asmp}\label{asmp:asmp1}
   1), If $P(h)=i$, then $P((h,a))\neq i$; 2)The tree of infosets for each player is a full $A$-ary tree. 
 \end{Asmp}
  
  Assumption \ref{asmp:asmp1} naturally leads to the following corollary:
  \begin{Cor}\label{cor:direct-extension}
  If a TEGI satisfies Assumption \ref{asmp:asmp1}, then $\forall \sigma, \sum_{I\in \mathcal{I}^i}\pi^{-i}_{\sigma}(I)=O(\sqrt{|\mathcal{I}^i|})$.
  \begin{proof}
    For simplicity, we denote $I_{h}^{i}\in \mathcal{I}^i$ as the corresponding infoset of $h$ for player $i$. Notice that, $\forall h\in H, P(h)=i$, we have $\forall a, \pi_{\sigma}^{-i}(I_{h}^{i}) = \pi_{\sigma}^{-i}(I_{h, a }^{-i})$ and $\sum_{a}\pi_{\sigma}^{-i}(I_{h, a, a^\prime}^{i}) = \pi_{\sigma}^{-i}(I_{h, a}^{-i})$. Thus, with Assumption \ref{asmp:asmp1}, $\sum_{a, a^\prime} \pi_{\sigma}^{-i}(I_{h, a, a^\prime}^{i}) = A\pi_{\sigma}^{-i}(I_{h}^{i})$, which means the accumulated reaching probability increases $O(A)$-times every two layer while the nodes (i.e. infosets) increases $O(A^2)$-times every two layer. Thus $\forall \sigma, \sum_{I\in \mathcal{I}^i}\pi^{-i}_{\sigma}(I)=O(\sqrt{|\mathcal{I}^i|})$.
  \end{proof}
  \end{Cor}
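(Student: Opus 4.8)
The plan is to organize the infosets of player $i$ by \emph{decision depth}: let level $\ell$ consist of all $I \in \mathcal{I}^i$ for which player $i$ has moved exactly $\ell$ times on the path to $I$ (the root infoset $I_r$ sits at level $0$). Writing $S_\ell := \sum_{I \text{ at level } \ell}\pi^{-i}_\sigma(I)$ for the level-summed opponent-plus-chance reach mass and $N_\ell$ for the number of infosets at level $\ell$, I would prove a geometric recursion $S_{\ell+1} = A\, S_\ell$ together with the count $N_\ell = A^{2\ell}$, and then compare the two geometric series. The final bound then drops out of a one-line geometric-sum computation.

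First I would establish the two local identities indicated in the sketch, but promoted to the infoset level. Fix $h$ with $P(h)=i$ and an action $a$; by part~1 of Assumption~\ref{asmp:asmp1} it is the opponent who moves at $(h,a)$. Since player $i$'s own move contributes only to $\pi^i_\sigma$ and not to $\pi^{-i}_\sigma$, passing from $h$ to $(h,a)$ leaves the opponent-plus-chance mass unchanged, giving $\pi^{-i}_\sigma(I^{-i}_{h,a}) = \pi^{-i}_\sigma(I^i_h)$. Conversely, summing over the opponent's action $a'$ at $(h,a)$ uses $\sum_{a'}\sigma^{-i}(a') = 1$ together with the fact that the opponent's move \emph{is} part of $\pi^{-i}_\sigma$, so $\sum_{a'}\pi^{-i}_\sigma(I^i_{h,a,a'}) = \pi^{-i}_\sigma(I^{-i}_{h,a})$. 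Chaining these and summing over the $A$ actions of player $i$ yields $\sum_{a,a'}\pi^{-i}_\sigma(I^i_{h,a,a'}) = A\,\pi^{-i}_\sigma(I^i_h)$, i.e. the mass grows by exactly a factor $A$ across two consecutive layers. Aggregated over a level this is $S_{\ell+1} = A\,S_\ell$, so $S_\ell = A^\ell S_0 \le A^\ell$, using that the root mass $S_0 = \pi^{-i}_\sigma(I_r)$ is at most $1$.

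Next I would count. Part~2 of Assumption~\ref{asmp:asmp1} makes the infoset tree a full $A$-ary tree, and combined with the alternation in part~1 the number of player-$i$ infosets multiplies by $A^2$ between consecutive player-$i$ levels (one branching of $A$ for player $i$, one for the opponent), so $N_\ell = A^{2\ell}$. If $L$ is the largest level, then $|\mathcal{I}^i| = \sum_{\ell=0}^{L} A^{2\ell} = \Theta(A^{2L})$, hence $\sqrt{|\mathcal{I}^i|} = \Theta(A^{L})$. Putting the pieces together, $\sum_{I \in \mathcal{I}^i}\pi^{-i}_\sigma(I) = \sum_{\ell=0}^{L} S_\ell \le \sum_{\ell=0}^{L} A^\ell = \Theta(A^{L}) = \Theta(\sqrt{|\mathcal{I}^i|})$, which is the claimed $O(\sqrt{|\mathcal{I}^i|})$ bound uniformly in $\sigma$.

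The step I expect to need the most care is the promotion of the per-history identities to the infoset level. This is where perfect recall is implicitly used: it guarantees that $\pi^i_\sigma(h)$ is constant over $h \in I$, so that $\pi^{-i}_\sigma(I) = \sum_{h\in I}\pi^{-i}_\sigma(h)$ is well-defined, and it ensures that the opponent decision nodes separating level $\ell$ from level $\ell+1$ partition cleanly, so that no reach mass is double-counted or lost when I collapse the $a'$-sum. The alternation hypothesis (part~1) is what lets me treat each player-$i$ level as followed by exactly one opponent level, and the full $A$-ary hypothesis (part~2) is what makes both the factor-$A$ growth of $S_\ell$ and the factor-$A^2$ growth of $N_\ell$ exact rather than merely upper bounds; without uniformity these become inequalities, but the same $\sqrt{|\mathcal{I}^i|}$ conclusion survives.
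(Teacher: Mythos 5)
Your proposal is correct and follows essentially the same route as the paper's proof: the two local identities (player $i$'s own move preserves $\pi^{-i}_\sigma$ mass, and summing over the opponent's actions collapses back to the parent's mass) chained into the factor-$A$ growth of reach mass versus factor-$A^2$ growth of infoset count per two layers, finished by comparing the two geometric series. You merely make explicit the level-by-level bookkeeping ($S_\ell$, $N_\ell$, and the geometric sums) that the paper leaves implicit, which is a welcome but not substantively different elaboration.
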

 
 With the above analysis, our algorithm is pretty simple:  at time step $t$, let $\tau_t(I)$ denote the last time step we update the strategy on infoset $I$ before $t$. Let $m_t(I):=\sum_{\tau=\tau_t(I)+1}^t \pi_{\sigma_{\tau}}^{-i}(I)$ denote the summation of  probabilities of arriving at $I$ after $\tau_t(I)$, which is contributed by all players except $i$. Let $subt(I)$ denote the subtree \footnote{Here, the tree is composed of infosets rather than histories.} rooted at infoset $I$. Let $succ(I)$ denote a subset of $subt(I)$ such that $\forall I'\in succ(I), P(I')=i$ and $\forall I''\in subt(I)$, if $I''$ is an ancient of $I'\in succ(I)$, then $P(I'')\neq i$ or $I''=I$. We simply update the strategies on infosets recursively as follows: 
 after updating the strategy on infoset $I$, we keep on updating the strategies on the infosets from $succ(I)$ with $m_t(I)\geq 1$. 
 We summarize our algorithm in Alg. \ref{alg:adapt}.
 
 Now we analyze Alg. \ref{alg:adapt}. 
  To give a clean theoretical result, we further make the following assumption:
    \begin{Asmp}\label{asmp:asmp2}
       Every infoset in the tree of infosets is corresponding to $n$ nodes in the game tree.
    \end{Asmp}
  
   It is noteworthy that Alg. \ref{alg:adapt} is still valid and efficient without both Assumption \ref{asmp:asmp1} and \ref{asmp:asmp2}.

  Now we present our theoretical results on the regret and time complexity of  Alg. \ref{alg:adapt} in  Lem \ref{lem:lazy-regret} and Lem \ref{lem:running-time} respectively. 
  \begin{Lem}\label{lem:lazy-regret}
      If the underlying game satisfies Assumption \ref{asmp:asmp1}, then, with RM , the regret of Alg. \ref{alg:adapt} is bounded by $O(D\sqrt{\xi AT})$; with Hedge, the regret can be bounded by   $O(D\sqrt{\xi T\log A})$.
      
      \begin{proof}
          According to Thm \ref{thm:main-upp-reg} and Eq. (\ref{eq:ratio-upper-bound}), we only need to bound $\max_t m_t(I)$. Below, we prove that $m_t(I)\leq d(I)$ where $d(I)$ is the depth of infoset $I$ in the game tree.
          
          We exploit mathematical induction to prove $m_t(I)\leq d(I)$. If it holds for $d(I)\leq d$. Consider $d+1$, it is obvious that at the last time step at which its parent was updated, there is at most $1$ cumulated probability at infoset $I$, thus $m_t(I)\leq m_t(pa(I))+1<d(pa(I))+1\leq d(I)$ where $I\in succ(pa(I))$.
      \end{proof}
      
  \end{Lem}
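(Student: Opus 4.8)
The plan is to reuse the machinery already in place. Theorem~\ref{thm:main-upp-reg} gives $R_T^i(\sigma)\le\sqrt{DT\,\eta(\sigma)}$, and Eq.~\eqref{eq:ratio-upper-bound} (with its Hedge analogue) reduces $\eta(\sigma)$ to the single structural quantity $\mathcal{G}(\sigma)=\xi^i\max_{I,j}\sum_{t=t_j(I)+1}^{t_{j+1}(I)}\pi^{-i}_{\sigma_t}(I)$. Since a segment of infoset $I$ is exactly the interval between two consecutive updates of $I$, each inner sum is a value of the accumulator $m_t(I)$ at the moment $I$ is updated, so $\max_{I,j}(\cdot)\le\max_{t,I}m_t(I)$. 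Thus the whole lemma collapses to a uniform bound $\max_{t,I}m_t(I)\le D$: feeding it into $\eta(\sigma)\le A\xi^i\max_{t,I}m_t(I)$ (RM) or $\eta(\sigma)\le\xi^i(\max_{t,I}m_t(I))\log A$ (Hedge) and then into Theorem~\ref{thm:main-upp-reg} yields $O(D\sqrt{\xi AT})$ and $O(D\sqrt{\xi T\log A})$, the only difference being the $A$ versus $\log A$ factor carried by Eq.~\eqref{eq:ratio-upper-bound} and its counterpart.

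To obtain $\max_{t,I}m_t(I)\le D$ I would prove the sharper claim $m_t(I)\le d(I)$ for every $t$ and every infoset $I$, where $d(I)$ is the depth, by induction on $d(I)$. For the base case, Alg.~\ref{alg:adapt} re-initializes $Q=\{I_r\}$ and immediately updates the root $I_r$ in every round, so $\tau_t(I_r)=t-1$ and $m_t(I_r)=\pi^{-i}_{\sigma_t}(I_r)\le 1$. For the inductive step the crucial inequality is $m_t(I)\le m_t(pa(I))+1$, where $pa(I)$ is the unique player-$i$ infoset with $I\in succ(pa(I))$. Let $s=\tau_t(pa(I))$ be the last round $\le t$ at which $pa(I)$ is updated. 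Because an infoset is touched only when its parent fires and no $pa(I)$ update occurs in $(s,t]$, no update of $I$ occurs in $(s,t]$ either, so $\tau_t(I)\le s$; splitting $m_t(I)$ at $s$, the part up to $s$ is below the threshold $1$ (it is $0$ if $I$ fired at $s$, and otherwise equals the accumulator checked at $s$, which was $<1$ since $I$ was not updated there). The part after $s$ is $\sum_{\tau=s+1}^{t}\pi^{-i}_{\sigma_\tau}(I)$, which I bound using the pointwise reach-monotonicity down the infoset tree.

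That monotonicity is exactly the computation behind Corollary~\ref{cor:direct-extension}: under Assumption~\ref{asmp:asmp1}, $\sum_{a''}\pi^{-i}_{\sigma}(I^i_{h,a,a''})=\pi^{-i}_\sigma(I^{-i}_{h,a})=\pi^{-i}_\sigma(I^i_h)$, so every child satisfies $\pi^{-i}_{\sigma}(I)\le\pi^{-i}_{\sigma}(pa(I))$ in every round. Hence $\sum_{\tau=s+1}^{t}\pi^{-i}_{\sigma_\tau}(I)\le\sum_{\tau=s+1}^{t}\pi^{-i}_{\sigma_\tau}(pa(I))=m_t(pa(I))$, giving $m_t(I)\le m_t(pa(I))+1\le d(pa(I))+1=d(I)$ and closing the induction. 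Assembling the pieces, $\max_{t,I}m_t(I)\le D$ forces $\mathcal{G}(\sigma)\le\xi^i D$, and substituting into $\eta$ and Theorem~\ref{thm:main-upp-reg} produces the two stated regret bounds.

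The step I expect to be the main obstacle is the inequality $m_t(I)\le m_t(pa(I))+1$. The delicate points are getting the bookkeeping around the parent's last firing time $s$ exactly right — verifying that $I$'s accumulator measured up to $s$ is genuinely below the threshold $1$ (which hinges on the fact that $I$ is \emph{checked} precisely when $pa(I)$ fires) — and confirming that the pointwise monotonicity $\pi^{-i}(I)\le\pi^{-i}(pa(I))$ indeed holds under Assumption~\ref{asmp:asmp1}, so that the post-$s$ tail of $m_t(I)$ can be dominated by $m_t(pa(I))$. Once this inequality is secured, the base case and the final substitution are routine.
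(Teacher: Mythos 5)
Your proposal is correct and follows essentially the same route as the paper: reduce via Theorem~\ref{thm:main-upp-reg} and Eq.~\eqref{eq:ratio-upper-bound} to bounding $\max_{t,I} m_t(I)$, then prove $m_t(I)\le d(I)$ by induction on depth using the key inequality $m_t(I)\le m_t(pa(I))+1$. In fact your write-up supplies exactly the details the paper's terse proof leaves implicit — the base case at the root, the split of $m_t(I)$ at the parent's last firing time $s$ with the sub-threshold bound on the pre-$s$ part, and the reach-monotonicity $\pi^{-i}_{\sigma}(I)\le\pi^{-i}_{\sigma}(pa(I))$ under Assumption~\ref{asmp:asmp1} that dominates the post-$s$ tail by $m_t(pa(I))$.
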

  
 	\begin{Lem}\label{lem:running-time}
 	The time complexity of Alg. \ref{alg:adapt} is $O(\# nodes_t)$ at round $t$, where $\# nodes_t$ is the number of nodes in the tree of histories which are touched by Alg. \ref{alg:adapt} during round $t$. 
 	    More specifically, if the underlying game satisfies both Assumption \ref{asmp:asmp1} and \ref{asmp:asmp2}, then the time complexity of Alg. \ref{alg:adapt} in each round is $O(\sqrt{|\mathcal{I}^i|} n)$ on average.
 	    \begin{proof}
 	        There is a little engineering involved to prove the first statement. We defer the details of the engineering into Appendix \ref{app:implement-details}. 
 	        
 	        The second statement is proved as follows. According to Corollary \ref{cor:direct-extension}, on average, there are at most $O(\sqrt{|\mathcal{I}|})$ infosets which satisfy $m_t(I)>1$, so that there are $O(n\sqrt{|\mathcal{I}|})$ nodes touched in each round. 
 	    \end{proof}
 	\end{Lem}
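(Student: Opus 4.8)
The plan is to prove the two statements separately: the first reduces the per-round running time to a count of touched history nodes and is essentially an implementation argument, whereas the second is a counting/averaging argument built on Corollary \ref{cor:direct-extension}. For the first statement I would charge every elementary operation Alg. \ref{alg:adapt} performs in round $t$ to a distinct node of the tree of histories visited in that round. The only nontrivial point is the cost of forming the segment-summed counterfactual reward vectors $r_j(I)$: as flagged in Sec. \ref{sec:lazy-olo}, naively recomputing $\sum_{t=t_j(I)}^{t_{j+1}(I)-1}\pi^{-i}_{\sigma_t}(I)u^i(\sigma_t|_{I\rightarrow a}, I)$ would cost time proportional to the segment length and destroy all savings. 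The key observation is that the strategy is frozen on every infoset inside a segment, so these partial sums can be maintained incrementally by propagating only the changes along the tree of histories; this bookkeeping is what I defer to Appendix \ref{app:implement-details}. Granting it, the running time in round $t$ is $O(\#nodes_t)$.

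For the second statement I would first bound the number of infoset updates. Writing $\tau_t(I)$ for the last update of $I$ before $t$ and $m_t(I)=\sum_{\tau=\tau_t(I)+1}^t \pi^{-i}_{\sigma_\tau}(I)$, the algorithm updates a non-root infoset $I$ only when $m_t(I)\geq 1$. Hence between any two consecutive updates of $I$ at least one unit of reach probability accumulates, so the number of updates of $I$ over the whole horizon is at most $\sum_{t=1}^T \pi^{-i}_{\sigma_t}(I)+1$, while the root is updated exactly once per round. Here I must note the parent--child coupling built into $succ(I)$: a child is examined only in a round in which its parent is updated, which can only delay, never create, an update of the child, so the above bound remains valid.

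Summing over $I\in\mathcal{I}^i$ and exchanging the order of summation, the horizon-total number of updates is at most $\sum_{t=1}^T \big(\sum_{I\in\mathcal{I}^i}\pi^{-i}_{\sigma_t}(I)\big)+|\mathcal{I}^i|$. Under Assumption \ref{asmp:asmp1}, Corollary \ref{cor:direct-extension} bounds the inner sum by $O(\sqrt{|\mathcal{I}^i|})$ for every $\sigma_t$, so the total is $O(T\sqrt{|\mathcal{I}^i|})+|\mathcal{I}^i|$, i.e. $O(\sqrt{|\mathcal{I}^i|})$ updates per round on average once $T=\Omega(\sqrt{|\mathcal{I}^i|})$. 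By Assumption \ref{asmp:asmp2} each infoset corresponds to $n$ history nodes, so each update touches $O(n)$ nodes and $\#nodes_t$ averages to $O(n\sqrt{|\mathcal{I}^i|})$; combined with the first statement this yields the claimed per-round complexity.

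The main obstacle is the incremental maintenance underlying the first statement: I must ensure that refreshing the frozen strategies together with the affected reward vectors costs only $O(\#nodes_t)$, and not an extra factor of the average segment length. This is precisely the point at which the game-tree structure rescues the lazy scheme that, as observed in Sec. \ref{sec:lazy-olo}, saves no time for a bare OLO, and it is where the engineering of Appendix \ref{app:implement-details} does the real work; by comparison, the averaging argument for the second statement is routine once Corollary \ref{cor:direct-extension} is in hand.
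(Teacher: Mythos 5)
Your proposal is correct and follows essentially the same route as the paper: the first statement is deferred to the incremental-bookkeeping engineering of Appendix \ref{app:implement-details}, and the second rests on Corollary \ref{cor:direct-extension} bounding $\sum_{I\in\mathcal{I}^i}\pi^{-i}_{\sigma_t}(I)$ by $O(\sqrt{|\mathcal{I}^i|})$ per round, so that each update consuming at least one unit of accumulated reach probability gives $O(\sqrt{|\mathcal{I}^i|})$ updates per round on average and hence $O(n\sqrt{|\mathcal{I}^i|})$ touched nodes under Assumption \ref{asmp:asmp2}. Your write-up is in fact more careful than the paper's one-line argument, since you make the amortization explicit (the $+1$ per infoset, the $T=\Omega(\sqrt{|\mathcal{I}^i|})$ proviso, and the observation that the parent--child gating in $succ(\cdot)$ can only delay updates), but the underlying idea is identical.
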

 
 According to Lem \ref{lem:lazy-regret}, \ref{lem:running-time} and \ref{lem:connection}, the regret is about $O(\sqrt{D})$ times larger than the regret of CFR, whilst the running time is about $O(\sqrt{|\mathcal{I}}|)$ times faster than CFR per round. 
 Thus, according to Lem \ref{lem:connection}, \ref{lem:tighter} and with a little algebra, we know that Alg. \ref{alg:adapt} is $O(\sqrt{|\mathcal{I}|}/D)$ times faster than the vanilla CFR to achieve the same approximation error, since the vanilla CFR has to traverse the whole game tree in each round. The improvement is significant in large scale TEGIs.
 
 \textbf{\emph{Lazy-CFR+}}: We can directly apply the idea of lazy update to CFR+ \citep{bowling2017heads}, which is a novel variant of CFR. CFR+ uses a  different regret minimization algorithm instead of RM. \citet{tammelin2015solving} prove that the running time of CFR+ is at most in the same order as CFR, but in practice CFR+ outperformes CFR. To get Lazy-CFR+, we only need to replace RM by RM+ in Alg. \ref{alg:adapt} and use the method of computing time-averaged strategy as in \citep{bowling2017heads}. We empirically evaluate Lazy-CFR+ in Sec. \ref{sec:exp}.
 
 \begin{figure*} [t]
\centering
\subfigure[Leduc-5]{\label{fig:leduc5}\includegraphics[width=0.3\textwidth]{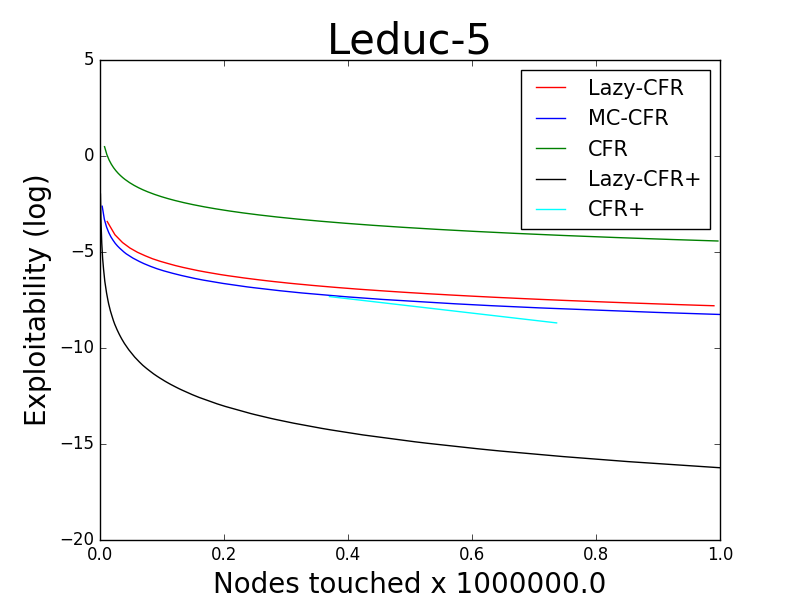}
}
\quad
\subfigure[Leduc-10]{\label{fig:leduc10}\includegraphics[width=0.3\textwidth]{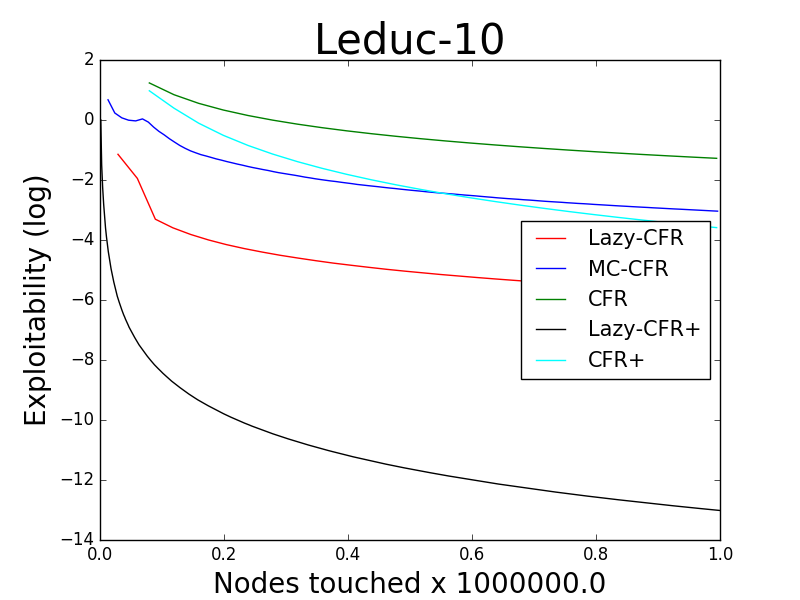}
}
\quad
\subfigure[Leduc-15]{\label{fig:leduc15}\includegraphics[width=0.3\textwidth]{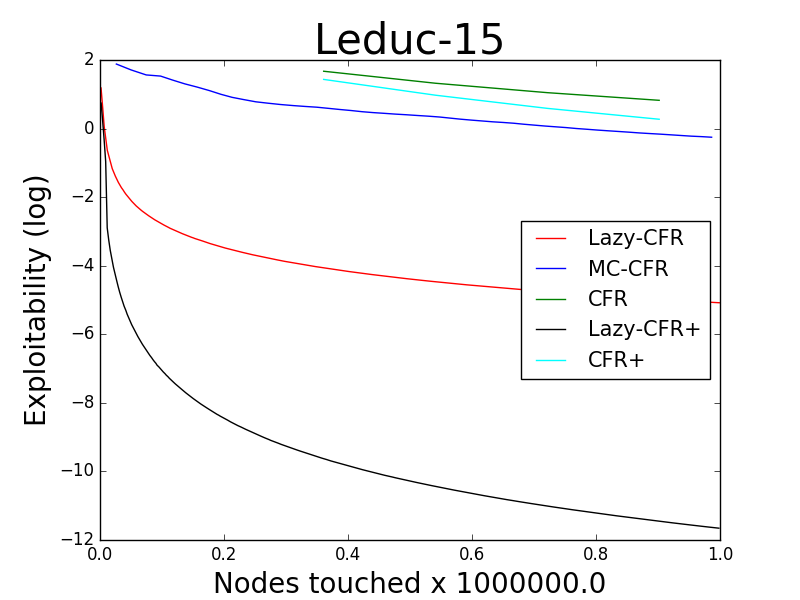}
}
\caption{Convergence for Lazy-CFR, Lazy-CFR+, MC-CFR, CFR and CFR+ on the Leduc Hold'em.}
\label{fig:leduc}
\end{figure*}
 
\section{Regret lower bound}\label{sec:lowerbound}
 In this section, we analyze the worst case lower bound on the regret. 
 We consider the standard adversarial setting in online learning, in which an adversary selects $\sigma_t^{-i}$ and a reward function $u^i_t: Z\rightarrow [0,1]$ where $Z$ is the set of terminal nodes in the infoset tree of player $i$. 
 
 Thus, to get a lower bound, we need to explicitly construct $\sigma_{t}^{-i}$ and $u^i_t$.  For convenience, 
 let $-i$ denote the players besides $i$, let $\zeta=\{\widehat{\sigma}^i: \widehat{\sigma}^i=\argmax_{\sigma^i}\sum_{I\in \mathcal{I}^i, P(I)=i}\pi_{\sigma^i}^i(I)\}.$ Let $M:=\{I\in \mathcal{I}^i: \exists \sigma^i\in \zeta, \pi^i_{\sigma^i}(I)>0 \}.$ It can be shown that $M$ forms a subtree of $\mathcal{I}^i$. 
  Our construction on $\sigma_t^{-i}$ and $u^i_t$ can be divided into two stages. 

\begin{enumerate}
    \item For $I\in Z, I\notin M$, $u_t^i(I)=0$ for all $t$. This enforces player $i$ take actions on $M$, otherwise, it will receive reward $0$. 
    \item For $I\in M, P(I)\neq i$, we first generate $a(I)\sim \rm{Multinomial}(\frac{\mathbf{1}}{A}),$ and then set $\sigma_t^{-i}(I)(a(I))=1$, and $\sigma_t^{-i}(I)(a)=0$ for $a\neq a(I)$. Intuitively, this step separates $R_T^i$ into $O(\xi^i)$ isolated OLOs, each of which is with dimension $A$ and would be repeated for about $O(T/\xi^i)$ rounds, since only one of them will be triggered on according to our construction on $\sigma_t^{-i}$. Thus, combined with the lower bound proved by \citep{cesa2006prediction}, each OLO incurs a regret of about $\Omega(\sqrt{T/\xi^i\log A})$. Then we can prove that $\max_{\pi_{\sigma_t}^{-i}, u_t} R_{T}^i \geq \xi^i \Omega(\sqrt{ T/\xi^i\log A})$ as in Thm \ref{thm:lowerbound}.
\end{enumerate}

\begin{Thm}\label{thm:lowerbound}
    For any algorithm, we have:
    \begin{align}
        \sup_{T, A}\max_{\pi_{\sigma_t}^{-i}, u_t} R_{T}^i \geq \Omega(\sqrt{\xi^i T\log A})
    \end{align}
    \begin{proof}
        See Appendix \ref{app:lowerbound} for a formal proof.
    \end{proof}
\end{Thm}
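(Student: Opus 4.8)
The plan is to exhibit an explicit \emph{randomized} adversary realizing the two-stage recipe sketched before the theorem, compute a lower bound on its expected regret, and then invoke an averaging (Yao) argument to extract a concrete deterministic worst case. First I would use Stage~1 to kill all reward outside the relevant subtree $M$: setting $u_t^i(I)=0$ for every terminal $I\notin M$ makes any branch leaving $M$ worthless, so the regret witness $\sigma$ keeps all of its reach inside $M$. This reduces the instance to the subtree $M$, on which the comparator that distributes its reach across $M$ realizes the total reach mass $\sum_{I\in M, P(I)=i}\pi^i_{\sigma}(I)=\xi^i$ by the definitions of $\zeta$, $M$, and $\xi^i$; crucially, the level-by-level sum of $\pi^i_{\sigma}$ is governed by the opponent's branching rather than by player $i$'s own choices, so this budget is available regardless of how the comparator is chosen.

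Second, I would use Stage~2 to decouple the infosets of $M$ into independent online-linear-optimization instances. Fixing a pure action $a(I)$ at each opponent infoset forces $\pi^{-i}_{\sigma_t}(I)\in\{0,1\}$, so the counterfactual reach becomes an indicator; by localizing the nonzero rewards I can arrange that at each round essentially one $A$-action OLO receives a nonzero counterfactual-reward vector, and cycling the active instance across the $\Omega(\xi^i)$ infosets of $M$ replays each instance about $T/\xi^i$ times. Since a strategy assigns its action at each infoset independently, $R_T^i(\sigma)$ decomposes as the reach-weighted sum $\sum_I \pi^i_{\sigma}(I)\,\rho_I$ of the per-instance regrets $\rho_I$, each of which the comparator can maximize independently.

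Third, I would invoke the classical $A$-expert lower bound of \citep{cesa2006prediction}: against i.i.d.\ random rewards on an instance run for $m$ rounds, every algorithm suffers expected regret $\Omega(\sqrt{m\log A})$. Applying this with $m\approx T/\xi^i$ to each of the $\Omega(\xi^i)$ instances gives $\mathbb{E}[\rho_I]\geq \Omega(\sqrt{(T/\xi^i)\log A})$. Choosing the comparator $\sigma_{\mathrm{unif}}$ that spreads mass over $M$ and using the fixed budget $\xi^i$ then yields $\mathbb{E}[\max_{\sigma}R_T^i(\sigma)]\geq \mathbb{E}[R_T^i(\sigma_{\mathrm{unif}})]=\sum_I \pi^i_{\sigma_{\mathrm{unif}}}(I)\,\mathbb{E}[\rho_I]\geq \xi^i\cdot\Omega(\sqrt{(T/\xi^i)\log A})=\Omega(\sqrt{\xi^i T\log A})$. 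Because the expected regret under the randomized adversary is at most the worst case over deterministic adversaries, some realization of $\{a(I)\}$ and of the reward sequence attains the bound, which is exactly Thm~\ref{thm:lowerbound}.

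The main obstacle I expect is the decoupling step: genuinely certifying that the embedded OLOs are independent and that the regret is additive. The reaches $\pi^i_{\sigma}(I)$ along a root-to-leaf path are coupled by tree-flow constraints, and activating a deep infoset also reaches its ancestors, so I must localize rewards (for instance to a single level, or design the routing so off-path infosets carry zero counterfactual reward) carefully enough that no comparator can recycle regret across instances and that ancestor infosets contribute no spurious terms. Verifying that the per-instance hard distribution can be realized simultaneously at all $\Omega(\xi^i)$ infosets, and that the pinned total mass $\xi^i$ makes the reach-weighting drop out, is where the real bookkeeping lives; the remainder is the standard expert lower bound plus the Yao-style averaging argument.
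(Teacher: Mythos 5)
Your construction is the same two-stage adversary the paper itself sketches before the theorem (zero rewards off $M$, pinned opponent actions, classical expert lower bound per instance), and your final arithmetic is the right one; but the proposal has a genuine gap, and it is exactly the step you flag at the end and then defer. The flat decomposition you invoke --- $R_T^i(\sigma)=\sum_I \pi^i_\sigma(I)\,\rho_I$ with ``each $\rho_I$ maximized independently'' --- is not available as stated, because the comparator's choice at an infoset plays two coupled roles: it is the comparator inside the immediate regret at $I$, and it simultaneously determines the reach weights $\pi^i_\sigma(I')$ of every descendant $I'$. The paper's appendix resolves precisely this coupling by induction over the infoset tree: at player-$i$ nodes it splits off the immediate regret $R^i_{T,imm}(I_r)$, shows $\mathbb{E}[R^i_{T,imm}(I_r)]\geq 0$ under the i.i.d.\ construction, and absorbs the routing weights $\sigma^i(I_r,a)$ via $\sum_a\sqrt{x_a}\geq\sqrt{\sum_a x_a}$; at opponent-owned nodes it uses the independence of the $A$ subtrees together with $\xi^i_Z(I_r)=A\,\xi^i_Z(J^i(I_r,a))$. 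Without that induction (or an equivalent argument) the additivity you assert is unproven, and it is the entire technical content of the theorem.

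Second, your instance accounting is off in a way that would break a naive implementation of the ``cycling'' idea. If the $\Omega(\xi^i)$ instances are taken to be the infosets active in a given round --- a transversal keeping one child per opponent node --- then \emph{any} comparator's total reach on that set is $O(1)$, not $\xi^i$: reach mass $\pi^i_\sigma$ only multiplies across opponent levels, and a transversal retains a single child at each of them. Weights times per-instance regrets then give only $\Omega(\sqrt{T\log A})$. To make the budget $\xi^i$ deployable you must either (a) take the instances to be the $\Theta(\xi^i)$ terminal infosets that differ from one another \emph{only in opponent moves}, so a single pure comparator reaches all of them with weight $1$ while the cycling adversary activates exactly one per block of length $T/\xi^i$; or (b) do what the paper does: resample the routing uniformly at every round, so that all $\Theta((\xi^i)^2)$ terminal infosets accrue expected regret $\Omega(\sqrt{(T/\xi^i)\log A})$, each carrying comparator weight $\Theta(1/\xi^i)$. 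Repair (a) is in fact viable and arguably more elementary than the paper's induction --- with i.i.d.\ Bernoulli$(1/2)$ rewards, every internal immediate regret against a fixed routing action has expectation exactly $0$, so only the instance terms survive --- but the proposal as written commits to neither repair and leaves the crux open.
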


By comparing the regret lower bound in Thm \ref{thm:lowerbound} and the regret  upper bounds of CFR and Lazy-CFR as in Lem \ref{lem:tighter} and Lem \ref{lem:lazy-regret}, we can see that both CFR and Lazy-CFR have near-optimal regret bounds within a factor of $O(\sqrt{D})$ and $O(D)$ respectively.

 \section{Related work}\label{sec:related-work}
There are several variants of CFR which attempt to avoid traversing the whole game tree at each round. Monte-Carlo based CFR  (MC-CFR) \citep{lanctot2009monte}, also known as CFR with partial pruning,  uses Monte-Carlo sampling to avoid updating the strategy on infosets with small probability of arriving at. %
 Pruning-based variants \citep{brown2016reduced,brown2015regret} skip the branches of the game tree if they do not affect the regret, but their performance can deteriorate to the vanilla CFR on the worst case. 

In order to solve a large scale extensive game, there are several techniques used except CFR, we give a brief summary on these techniques. \citep{brown2017safe} proposed a technique on sub-game solving, which makes us to be able to solve the NE on a  subtree.
 Another useful technique is abstraction. 
 Abstraction \citep{gilpin2007lossless} reduces the computation complexity by solving an abstracted game, which is much smaller than the original game. 
There are two main kinds of abstractions, lossless abstraction and lossy abstraction. 
Lossless abstraction algorithms~\citep{gilpin2007lossless}  ensures that each equilibrium in the abstracted game is also an equilibrium in the original game.
For Poker games, it is able to reduce the size of the game by one-to-two orders of magnitude.
Lossy abstraction algorithms~\citep{kroer2014extensive, sandholm2015abstraction} 
create smaller, coarser game, in the cost of a decrease in the solution quality. 
Both of above two kinds of abstractions can be used to reduce the number of actions or the number of information sets. 


\section{Experiment}\label{sec:exp}

In this section, we empirically compare our algorithm with existing methods. We compare Lazy-CFR and Lazy-CFR+ with CFR, MC-CFR, and CFR+. 

In our experiments, we use RM as the OLO solver as it is commonly  used in practice and we do not use any heuristic pruning in CFR, CFR+, Lazy-CFR and Lazy-CFR+.  

Experiments are conducted on variants of the Leduc hold'em \citep{brown2015regret}, which is a common benchmark in imperfect-information game solving. Leduc hold'em is a simplifed version of the Texas hold'em. In Leduc hold'em, there is a deck consists of 6 cards, two Jack, two Queen and two King. There are two dealt rounds in the game. In the first round, each player receives a single private card. In the second round, a single public card is revealed. A bet round takes place after each dealt round, and player $1$ goes first. In our experiments, the bet-maximum varies in $5, 10$ and $15$. 

As discussed in Lem \ref{lem:running-time}, Alg. \ref{alg:adapt} uses $O(1)$ running time for each touched node, which is the same as in the vanilla CFR, CFR+ and MC-CFR. Thus, we compare the number of touched nodes of these algorithms, since nodes touched is independent with hardware and implementation. 

We measure the exploitability of these algorithms. The exploitability of a strategy $(\sigma^1, \sigma^2)$ can be interpreted as the approximation error to the Nash equilibrium. The exploitability is defined as $\max_{\sigma'^{,1}}u^1((\sigma'^{,1}, \sigma^2)) + \max_{\sigma'^{,2}}u^2((\sigma^1, \sigma'^{, 2}))$. 

Results are presented in Fig. \ref{fig:leduc}. The performance of  Lazy-CFR is slightly worse than MC-CFR and CFR+ on Leduc-5. But as the size of the game grows, the performance of Lazy-CFR outperforms all baselines. And Lazy-CFR+ consistently outperforms other algorithms. Thus, empirical results show that our method, lazy update, is a powerful technique to accelerate regret minimization algorithms for TEGIs. More specifically, on our largest experiment, Leduc-15, with over $1.3\times 10^5$ infosets, Lazy-CFR converges over 200 times faster than CFR, and Lazy-CFR+ converges over 500 times faster than CFR+.

\bibliography{example_paper}
\bibliographystyle{icml2019}

\appendix
\newpage
\onecolumn

\section{The details of implementation}\label{app:implement-details}

In this section, we discuss how to efficiently implement the idea of lazy update in TEGIs. 

The challenge is that if we are going to update the strategy on an infoset, we need to compute the summation of counterfatual rewards over a  segment. If we compute the summation directly, then lazy update enjoys no improvement compared with the vanilla CFR. Fortunately, this problem can be addressed in TEGIs. The key observation is that the reward on an infoset changes if and only if the strategies on some infosets in the subtree changed. 

More specifically, suppose we are going to compute the cumulative counterfactual reward on a history $s$ over the segment $[T_1, T_2]$ and $[T_1, T_2]$ is divided into $[T_1=:t_1,t_2 - 1], [t_2,t_3-1],...,[t_{n-1}, t_n-1=T_2]$ where $u^i(s, t')$ keeps the same in $[t_i, t_{i+1}-1]$. Then we have:

\begin{align*}
     \sum_{t=T_1}^{T_2} \pi^{\sigma_t}_{-i}(s) u^i(s; \sigma_t)  = \sum_{j=1}^{n-1} u^i(s; \sigma_{t_j}) \sum_{t'=t_j}^{t_{j+1}-1} \pi_{-i}(\sigma_t) 
\end{align*}

Obviously, by exploiting the tree structure and with elementary data structures, both $u^i(s; \sigma_{t_j})$ and  $\sum_{t'=t_j}^{t_{j+1}-1} \pi_{-i}(\sigma_t)$ can be computed in running time $O(1)$. Thus, we only need to analyze the number of segments.

We exploit the following property to bound the number of segments: 
\begin{Prop}
In Lazy-CFR, if the strategy on an infoset $I$ is updated in a round and $I\in succ(I')$, then $I'$ is also updated in this round.
\end{Prop}

So that, suppose there are $N_t$ infosets' strategies updated in round $t$, there are at most $O(N_t)$ new segments. 

For convenience, let $J(h, a)$ denote the history of taking action $a$ after $h$ and $J(I, a)$ denote the infoset of taking action $a$ after $h$. Suppose there are $n$ kinds of privation infomation for each player. For $I\in \mathcal{I}^i$, let $\mathcal{H}(I, j)$ denote the underlying history if the opponent's private information   is the $j$-th one. We elaborate Lazy-CFR in a more detailed pseudo-code, Alg. \ref{alg:detail}.

\begin{algorithm}[t]
\caption{A detailed implementation of Lazy-CFR}
\label{alg:detail}  
\begin{algorithmic}[1]
\STATE A two-player zero-sum extensive game.
\STATE Randomly initialize $\sigma$.
\STATE $\forall h\in H$, compute the counterfactual value $cfv^i(h)=u^i(h|\sigma), flag(h)=-1$.
\STATE $\forall{I}\in \mathcal{I}, \bar{\sigma}(I)=0, s(I)=0, \forall a\in \mathcal{A}(I), CFV(I, a)=0$.
\WHILE {$t<T$}
    \FORALL{$i\in\{1, 2\}$}
        \STATE UPDATE1$(I_r^i, i, [1.0, \cdots, 1.0]).$
    \ENDFOR
        \STATE UPDATE2$(h_r, t, 0.0)$ where $h_r$ denotes the root of the history tree.
\ENDWHILE
\STATE \textbf{RETURN} $\bar{\sigma}$.
\end{algorithmic}
\end{algorithm}

\begin{algorithm}[t]
\caption{UPDATE1($I,i, p=[p(1),\cdots, p(n)], t$)}
\label{alg:upd_by_rm}  
\begin{algorithmic}[1]
\STATE $\forall j\in [n], m_1(I, j) = m_1(I, j) + p(j), m_2(I, j) = m_2(I, j) + p(j).$

\IF{$\sum_{j\in [n]} m_1(I, j)\geq 1$}
    \IF{$P(I)=i$}
        \FORALL{$a\in \mathcal{A}(I)$}
            \STATE Create $p'$ with $p'(j)=m_2(I, j) + p(j)$
            \STATE UPDATE1($J(I, a), i, p, t$).
        \ENDFOR
        \STATE $\forall j\in [n],$ updflag$(\mathcal{H}(I, j), t).$
        \FORALL{$a\in \mathcal{A}(I), j\in[n]$}
            \STATE $h=\mathcal{H}(I, j).$
            \STATE $CFV(I, a)=CFV(I, a) + m_2(I, j)\times cfv^i(J(\mathcal{H}(I, j), a))$
            \STATE $m_1(I, j)=m_2(I, j) = 0.$
        \ENDFOR
        \STATE $\bar{\sigma}(I) = (s(I)\bar{\sigma}(I) + \sigma^i(I)\sum_{j\in [n]} m_1(I, j))/(s(I) + \sum_{j\in [n]}m_1(I, j))$, $s(I)=s(I) + \sum_{j\in[n]}m_1(I, j)$
        \STATE $\sigma^i(I)=$RM$(CFV(I))$.
    \ELSE 
        \FORALL{$a\in\mathcal{A}(I)$}
            \STATE $p'=[0.0, \cdots, 0.0]$
            \FORALL{$j\in[n]$} 
                \STATE $p'(j)=(p(j) + m_2(I, j))\times \sigma(\mathcal{H}(I, j), a)$
            \ENDFOR
            \STATE UPDATE1$(J(I, a), i, p', t)$
        \ENDFOR
        \STATE $\forall j, m_1(I, j)=m_2(I, j)=0$
    \ENDIF
\ENDIF
\STATE \textbf{RETURN} $\bar{\sigma}$.
\end{algorithmic}
\end{algorithm}

\begin{algorithm}[t]
\caption{updflag$(h, t)$}  
\begin{algorithmic}[1]
\IF{$h$ is not the root of history tree and $flag(h)\neq t$}
    \STATE $flag(h)=t.$
    \STATE $updflag(pa(h))$ where $pa(h)$ is the parent of $h.$
\ENDIF
\STATE \textbf{RETURN} $\bar{\sigma}$.
\end{algorithmic}
\end{algorithm}

\begin{algorithm}[t]
\caption{UPDATE2$(h, t, p)$}  
\begin{algorithmic}[1]
\IF{$flag(h)=t$}
    \STATE $i=P(h), I=\mathcal{K}(h, i).$
    \STATE Consider $j$ satisfies $\mathcal{H}(I, j)=h.$ 
    \FORALL{$a\in\mathcal{A}(h)$}
        \STATE  $CFV(I, a) = CFV(I, a)+cfv^i(J(h, a))\times (m_2(I, j) + p).$
        \STATE UPDATE2$(J(h, a), t, (p + m_2(I, j)) \sigma^i(I, a))$.
    \ENDFOR 
    \STATE update $cfv^i(h).$
    \STATE $m_2(I, j)=0.$
\ENDIF
\end{algorithmic}
\end{algorithm}

\section{Proof of the regret bound in Eq.  \eqref{eq:rm_bound}}\label{app:proof-rm-upper}
Regret matching can be viewed as an instance of online mirror descent (OMD) algorithm \citep{shalev2012online}. We first introduce the Fenchel's duality for some kinds of regularizer $R$:
\begin{align*}
R^*(x) = \sup_{w}( \langle x, w \rangle - R(w))
\end{align*}
We can get the derivative:
\begin{align*}
    \nabla R^*(x) = \mathop{\arg\max}_w(\langle x, w \rangle - R(w))
\end{align*}
OMD (see Algorithm \ref{alg:OMD}) is a natural implementation of the Follow-the-Regularized-Leader framework (FoReL) \citep{shalev2012online}. In FoReL, $w_t$ is selected by 
\begin{align*}
    \forall t, w_t = \mathop{\arg\max}_{w} \sum_{i=1}^{t-1} f_i(w) + R(w)
    \footnotemark
\end{align*}
\footnotetext{In \citep{shalev2012online} the environment provides the loss functions $f_t$, thus they need to minimize the loss and apply mirror descent each round. In our setting, we need to maximize the reward, so we instead apply mirror ascent here. We still use the name online mirror descent to avoid the potential ambiguity.}
\begin{algorithm}
\caption{Online Mirror Descent}
\label{alg:OMD}
\begin{algorithmic}[1]
\STATE Initialize: $x_1=\mathbf{0}$
\FOR {$t=1, 2, \cdots$}
    \STATE predict $w_t=\nabla R^{*}(x_t)$
    \STATE update $x_{t+1} = x_t + z_t$, where $z_t \in \partial f_t(w_t)$
\ENDFOR
\end{algorithmic}
\end{algorithm}
It is known that if $R$ is $\frac{1}{\eta}$-strongly-convex, then OMD enjoys the regret bound:
\begin{align*}
    {\rm Regret}_{T}(u) \leq R(u) - \min_wR(w) + \eta \sum_{t=1}^T \|z_t\|_{*}^2
\end{align*}
where $\|\cdot\|_{*}$ is the dual norm.
\begin{proof}
In regret matching, $w\in \Delta(\mathcal{A})$. We choose $R(w) = \frac{\|w\|_{2}^2}{2\eta}$, and it's easy to find that when $w_i = \frac{[x]_{i, +}}{\sum_i [x]_{i, +}}$, $\langle x,w\rangle - R(w)$ get its supremum $\sum_i \frac{[x]_{i, +}^2}{2\eta}$, where $[x]_{i, +} = \max(x_i, 0)$. 

To show that regret matching is one kind of OMD, We first prove that $c_t - \langle w_t, c_t \rangle \mathbf{1} \in \partial_{w_t} \langle w_t, c_t\rangle$, which is equivalent to prove that $\forall u\in \Delta(\mathcal{A})$,
\begin{align*}
    \sum_{t=1}^{T}\langle u - w_t, c_t \rangle \leq & \sum_{t=1}^{T}\langle u-w_t, c_t - \langle w_t, c_t \rangle \mathbf{1}  \rangle
\end{align*}
Notice that $\langle u - w_t, \mathbf{1}\rangle = 0$ as $u, w_t \in \Delta(\mathcal{A})$, so the above inequality holds $\forall t$.

Thus, regret matching enjoys the regret bound:
\begin{align*}
    {\rm Regret}_{T}(\Delta(\mathcal{A})) \leq & \max_{w \in \Delta(\mathcal{A})} R(w) - \min_{w \in \Delta(\mathcal{A})} R(w) + \eta \sum_{t=1}^T \|c_t\|_{2}^2\\
    \leq & \frac{1}{2\eta} + \eta \sum_{t=1}^T \|c_t\|_{2}^2\\
\end{align*}
Let $\eta = 1/\sqrt{\sum_{t=1}^T \|c_t\|_{2}^2}$, we prove \eqref{eq:rm_bound}
\end{proof}
\section{Proof of Thm \ref{thm:main-upp-reg}}\label{app:proof-reg-upper}
We first prove Eq. \eqref{eq:regret-overall}.
\begin{proof}
Let $J^i(I,a)$ denote the infoset of player $i$ after $P(I)$ takes $a$ at $I$. Without loss of generality, we assume that $P(I^i_r)=i$ where $I^i_r$ is the root of the tree of player $i$'s infoset. Then we have:
\begin{align*}
    R_T^i(\sigma) &= \sum_{t=1}^T u^i((\sigma^i, \sigma_t^{-i})) - \sum_{t=1}^T u^i((\sigma_t^i, \sigma_t^{-i}))\\
    &=\sum_{t=1}^T \left(u^i((\sigma^i, \sigma_t^{-i})) - u^i((\sigma_t^i|_{I_r^i\rightarrow \sigma^i(I)}, \sigma_t^{-i}))\right) +   \sum_{t=1}^T\left( u^i((\sigma_t^i|_{I_r^i\rightarrow \sigma^i(I)}, \sigma_t^{-i})) -  u^i((\sigma_t^i, \sigma_t^{-i}))\right)\\
    &= \sum_{t=1}^T\left( u^i((\sigma^i|_{I_r^i\rightarrow \sigma_t^i(I)}, \sigma_t^{-i})) -  u^i((\sigma_t^i, \sigma_t^{-i}))\right) \\
    &+ \sum_{t=1}^T\sum_{a\in A(I_r^i)} \sigma^i(I_r^i, a)\left(u^i((\sigma^i, \sigma_t^{-i}), J^i(I^i_r, a)) - u^i((\sigma_t^i|_{I_r^i\rightarrow \sigma^i(I)}, \sigma_t^{-i}), J^i(I^i_r, a))\right)\\
    &\cdots \text{(recursively use this decomposition to the second term.)}\\
    &=\sum_{t} \sum_{I\in \mathcal{I}^i}\pi_{\sigma}^i(I)\pi_{\sigma_t}^{-i}(I) (u^i(\sigma_t|_{I\rightarrow \sigma(I)}, I) - u^i(\sigma_t, I))
\end{align*}
\end{proof}
Now we prove Theorem \ref{thm:main-upp-reg}.
\begin{proof}
With Eq. \eqref{eq:regret-overall} and the regret bound of RM, we have
	\begin{align*}
	\frac{1}{T}R_T^{i}(\sigma) &= \frac{1}{T}\sum_{t} \sum_{I\in \mathcal{I}^i}\pi_{\sigma}^i(I)\pi_{\sigma_t}^{-i}(I) (u^i(\sigma_t|_{I\rightarrow \sigma(I)}, I) - u^i(\sigma_t, I))\\
        &\leq \sum_{I\in \mathcal{I}^i}\pi^i_{\sigma}(I)O\left(\sqrt{\sum_{j=1}^{n(I)}\|r_j(I)\|^2 /T^2}\right)
	\end{align*}
	
	And then apply Jensen's inequality and with some calculations, we have 
    \begin{align*}
    	\frac{1}{T} R_T^{i}(\sigma) & 
        \leq \sum_{I\in \mathcal{I}^i}\pi^i_{\sigma}(I)O\left(\sqrt{\sum_{j=1}^{n(I)}\|r_j(I)\|^2/T^2}\right)\\
        &= \frac{\sum_{I,t}\pi^i_{\sigma}(I')\pi^{-i}_{\sigma_t}(I')}{\sum_{I',t}\pi^i_{\sigma}(I')\pi^{-i}_{\sigma_t}(I')} \sum_{I\in \mathcal{I}^i}\pi^i_{\sigma}(I)O\left(\sqrt{\sum_{j=1}^{n(I)}\|r_j(I)\|^2/T^2}\right)\\
        &= \sum_{I',t}\pi^i_{\sigma}(I')\pi^{-i}_{\sigma_t}(I') \sum_{I\in \mathcal{I}^i}\frac{\pi^i_{\sigma}(I)}{\sum_{I',t}\pi^i_{\sigma}(I')\pi^{-i}_{\sigma_t}(I')}O\left(\sqrt{\sum_{j=1}^{n(I)}\|r_j(I)\|^2/T^2}\right)\\
        &=\sum_{I',t}\pi^i_{\sigma}(I')\pi^{-i}_{\sigma_t}(I') \sum_{I\in \mathcal{I'}^i}\frac{\pi^i_{\sigma}(I)\sum_t \pi_{\sigma_t}^{-i}(I)}{\sum_{I',t}\pi^i_{\sigma}(I')\pi^{-i}_{\sigma_t}(I')}O\left(\sqrt{\frac{\sum_{j=1}^{n(I)}\|r_j(I)\|^2}{\left(\sum_t \pi_{\sigma_t}^{-i}(I)\right)^2T^2}}\right)\\
        &\leq \sum_{I',t}\pi^i_{\sigma}(I')\pi^{-i}_{\sigma_t}(I') O\left(\sqrt{\sum_{I\in \mathcal{I}^i}\frac{\pi^i_{\sigma}(I)\sum_t \pi_{\sigma_t}^{-i}(I)}{\sum_{I',t}\pi^i_{\sigma}(I')\pi^{-i}_{\sigma_t}(I')}\frac{\sum_{j=1}^{n(I)}\|r_j(I)\|^2}{\left(\sum_t \pi_{\sigma_t}^{-i}(I)\right)^2T^2}}\right)\\
        &=O\left(\sqrt{\left(\sum_{I',t}\pi^i_{\sigma}(I')\pi^{-i}_{\sigma_t}(I')\right)\sum_{I\in \mathcal{I}^i} \pi^i_{\sigma}(I)\frac{\sum_{j=1}^{n(I)}\|r_j(I)\|^2}{\sum_t \pi_{\sigma_t}^{-i}(I)T^2}}\right)\\
        &\leq O\left(\sqrt{\frac{D}{T}\sum_{I\in \mathcal{I}^i} \pi^i_{\sigma}(I)\frac{\sum_{j=1}^{n(I)}\|r_j(I)\|^2}{\sum_{t=1}^T \pi_{\sigma_t}^{-i}(I)}}\right)
    \end{align*}
    The last inequality utilizes the fact that  $\sum_{I',t}\pi^i_{\sigma}(I')\pi^{-i}_{\sigma_t}(I')\leq DT$. 
    Now we finished the proof.
    \end{proof}
\section{The lower bound analysis}\label{app:lowerbound}
Notice that the application of CFR does not depend on the game and adversary. So we here assume the adversary choose both its strategy profile and the utility each turn.



Let $J^i(I, a)$ denotes the infoset of player $i$ after $P(I)$ takes $a$ at $I$, $n_t(I)$ denotes the time player $i$ arrives at $I$ in the first $t$ round. We first show the following corollary:
\begin{Cor}
\label{cor:same_order}
    Let $\sigma^\prime_i := \mathop{\arg\max}_{\sigma}\sum_{I\in\mathcal{I}}\pi_{\sigma}^i(I)$ and $\xi_{\sigma^\prime_i, Z}^i := \sum_{I\in\mathcal{I}_Z} \pi_{\sigma^\prime_i}^i(I)$, we have that
    \begin{align}
        \xi^i = \Theta(\xi_{\sigma^\prime_i, Z}^i)
    \end{align}
\end{Cor}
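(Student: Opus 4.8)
The plan is to reduce both $\xi^i$ and $\xi^i_{\sigma'_i,Z}$ to one common structural quantity — the total player-$i$ reach-probability mass sitting at the deepest layer of the infoset tree — and then show each equals that mass up to a constant factor. The engine is a telescoping identity for the reach probability contributed by player $i$. For a fixed depth $\ell$, write $S_\ell := \sum_{I \text{ at depth } \ell} \pi^i_\sigma(I)$. I would first observe that $S_\ell$ is \emph{independent of} $\sigma$: whenever a node $I$ is a decision node of player $i$, its children $(I,a)$ satisfy $\pi^i_\sigma((I,a)) = \pi^i_\sigma(I)\sigma^i(I,a)$, so they contribute $\sum_a \pi^i_\sigma(I)\sigma^i(I,a) = \pi^i_\sigma(I)$ to the next layer regardless of $\sigma^i$; whereas at an opponent node player $i$ does not act, so each child inherits $\pi^i_\sigma(I)$ and the contribution is $(\#\text{children})\cdot \pi^i_\sigma(I)$. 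Under Assumption \ref{asmp:asmp1} (full $A$-ary infoset tree with alternating ownership of layers) this yields the clean recursion $S_{\ell+1}=S_\ell$ across a player-$i$ layer and $S_{\ell+1}=A\,S_\ell$ across an opponent layer, with $S_0=1$.

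Second, I would exploit this recursion to establish geometric domination by the deepest layer. Since every opponent layer multiplies the mass by $A\ge 2$ while player-$i$ layers leave it unchanged, the sequence $(S_\ell)$ is non-decreasing and grows by a factor $\ge A$ every two layers; hence $\sum_{\ell} S_\ell \le \frac{A}{A-1}\,S_{\text{leaf}} \le 2\,S_{\text{leaf}}$, i.e. the whole tree's mass is within a constant factor of the leaf-layer mass $S_{\text{leaf}}$. Because $S_\ell$ does not depend on $\sigma$, the quantity $\xi^i_{\sigma'_i,Z}=\sum_{I\in\mathcal{I}_Z}\pi^i_{\sigma'_i}(I)$ equals $S_{\text{leaf}}$ for \emph{every} strategy, so the specific choice of the maximizer $\sigma'_i$ is immaterial.

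Third, I would sandwich $\xi^i$. Since $\xi^i=\max_\sigma\sum_{I:P(I)=i}\pi^i_\sigma(I)$ sums the $\sigma$-independent masses over exactly the player-$i$ layers, it is itself $\sigma$-independent and obeys $S_{\text{leaf}}/A \le (\text{deepest player-}i\text{ layer mass}) \le \xi^i \le \sum_\ell S_\ell \le 2\,S_{\text{leaf}}$, where the leftmost inequality accounts for the at-most-one opponent layer separating the deepest player-$i$ decision layer from the leaves (a factor $A$). Combining with the previous step gives $\xi^i = \Theta(S_{\text{leaf}}) = \Theta(\xi^i_{\sigma'_i,Z})$, which is the claim.

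The main obstacle is decoupling the two \emph{different} index sets and strategies appearing in the two quantities: $\xi^i$ ranges over player-$i$ decision infosets under its own maximizer, whereas $\xi^i_{\sigma'_i,Z}$ ranges over leaf infosets under the total-mass maximizer $\sigma'_i$. The telescoping identity is exactly what makes this tractable, since it forces every complete-layer mass to be $\sigma$-independent and thereby collapses both optimizations into structural constants; the only residual subtlety is the parity of the leaf layer (whether the leaves sit on a player-$i$ layer or an opponent layer), which moves the constant between $1$ and $A$ but never changes the $\Theta$ order. The genuinely delicate point, should one wish to drop the full $A$-ary assumption, is that the per-layer mass restricted to player-$i$ nodes is no longer $\sigma$-independent and no longer dominated by a single geometric term, so the sandwich would then have to be rebuilt from frontier-monotonicity arguments rather than from a closed-form recursion.
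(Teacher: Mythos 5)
Your route is the same one the paper takes: the paper's own proof of this corollary is a one-line pointer to the technique of Corollary \ref{cor:direct-extension}, i.e.\ the layer-by-layer bookkeeping of reach-probability mass, and your telescoping identity ($S_{\ell+1}=S_\ell$ across a player-$i$ layer, $S_{\ell+1}=A\,S_\ell$ across an opponent layer), the resulting $\sigma$-independence of each layer's mass, and the geometric domination of the total mass by the deepest layer are exactly that argument, carried out in more detail than the paper itself provides.

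One side claim does need repair. Your sandwich concedes a factor $A$ on the left ($S_{\text{leaf}}/A \le$ deepest player-$i$ layer mass), and you then assert that a constant ``between $1$ and $A$'' never changes the $\Theta$ order. In the context where this corollary is used, that assertion is wrong: Corollary \ref{cor:same_order} feeds the proof of Theorem \ref{thm:lowerbound}, whose statement takes $\sup_{T,A}$, so $A$ is a free, growing parameter; losing a factor $A$ inside $\xi^i$ would cost a factor $\sqrt{A}$ in the regret lower bound and defeat the purpose of the corollary. Fortunately your own recursion shows the factor is not actually there: it is the crossing of an \emph{opponent} layer (an opponent node fanning out to its $A$ children) that multiplies mass by $A$, whereas crossing a \emph{player-$i$} layer preserves mass. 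The parity case you worry about --- the opponent acting last, so that the leaves sit one opponent layer below the deepest player-$i$ layer --- is exactly what the paper's normalization in the lower-bound section removes: the adversary's final move is merged into the utility, so the leaves are the children of the deepest player-$i$ layer, and $S_{\text{leaf}}$ equals that layer's mass \emph{exactly}. With that observation (plus the paper's implicit assumption $|\mathcal{A}(I)|\ge 2$, which you already use to sum the geometric series), your sandwich closes with absolute constants, and the corollary holds in the form the lower bound needs.
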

Here we make an implicit assumption that $|\mathcal{A}(I)| \geq 2, \forall I\in\mathcal{I}^{P(I)}$. Otherwise we can merge these infosets as we have no choice but choose the only action, which contributes nothing to the regret. With this assumption, we can simply prove Corollary \ref{cor:same_order} with the same techniques using in the proof of Corollary \ref{cor:direct-extension}.

Now we prove Theorem \ref{thm:lowerbound}.
\begin{proof}

Without loss of generality, we assume $P(I)\neq i, \forall I\in Z$ where $Z$ is the set of terminal nodes (i.e. player $i$ takes the final action, otherwise, we can merge the adversary's final action selection into the utility design). Let $\mathcal{D}$ denotes the case satisfies Assumption \ref{asmp:asmp1} with $u_t(I)\sim \rm{Bernoulli}(0.5)$, $\forall I\in\mathcal{I}_{Z}$ and $\pi_{\sigma_t}^{-i}(I)\sim \rm{Multinomial}(\frac{\mathbf{1}}{A})$, $\forall I\in\mathcal{I}^i$. $a_{\sigma_t}^{-i}(I)$ denotes the action sample from  $\pi_{\sigma_t}^{-i}(I)$ at round $t$. 

Notice that in $\mathcal{D}$, for any $\sigma_i$, $\sum_{I\in\mathcal{I}} \pi_{\sigma_i}^i(I)$ is the same, and we can omit the max operation in the following proof. For general cases, we can set $u_t(I) = 0$ if $\pi_{\sigma^\prime_i}(I)=0, I\in Z$, then we back to this case. For notation simplicity, we assume the infoset tree is a full $A$-ary tree.

Let $\xi^i_{Z}= \sum_{I\in Z}\pi_{\sigma}^i(I)$. We prove that $\sup_{T, A}\mathbb{E}_{\mathcal{D}}R_T^{i} \geq \sqrt{\frac{\xi^i_{Z} T\log A}{2}}$. As \citet{auer1995gambling, freund1997decision, dani2008price} have shown, the lower bound for $K$-arm online linear optimization problem is $\sqrt{\frac{T\log K}{2}}$, which is consistent to our $\sqrt{\frac{\xi^i_Z T \log A}{2}}$ with $\xi^i_Z = 1$.

With a little abuse of notation, we use the term $R_T^i(I)$ and $\xi^i_Z(I)$ to represent the regret and the cumulative reaching probability of the leaf nodes rooted at $I$. 

We first show that if $P(I_r) \neq i$ and $\sup_{T, A}\mathbb{E}_{\mathcal{D}}R_{T}^i(J^i(I, a)) \geq \sqrt{\frac{\xi^i_{Z}(J^i(I, a))T\log A}{2}}, \forall a\in \mathcal{A}(I_r)$, then $\sup_{T, A}\mathbb{E}_{\mathcal{D}}R_T^{i} \geq \sqrt{\frac{\xi^i_{Z} T\log A}{2}}$. Notice that each subtree rooted at $J^i(I_r, a), a\in\mathcal{A}(I_r)$ have the same $\xi^i$, thus $\xi^i_Z(I_r) = A \xi^i_Z(J^i(I_r, a)), \forall a\in\mathcal{A}(I)$, and we have that



\begin{align*}
    \sup_{T, A}\mathbb{E}_{\mathcal{D}}R_T^i(I_r) = & \sup_{T, A}\mathbb{E}_{\mathcal{D}} \sum_{a\in \mathcal{A}(I_r)}\mathds{1}_{a_{\sigma_t}^{-i}(I_r) = a}  R_T^i(J^i(I_r, a))\\
    = & \sup_{T, A} \sum_{a\in\mathcal{A}(I_r)} \mathbb{E}_{\mathcal{D}}R_{n_T(I)}^i(J^i(I_r, a))\\
    = & \sum_{a\in\mathcal{A}(I_r)} \sup_{n_T(J^i(I, a)), A}\mathbb{E}_{\mathcal{D}}R_{n_T(J^i(I, a))}^i(J^i(I_r, a))\\
    = & A \sqrt{\frac{\xi^i_Z(J^i(I_r, a)) T \log A}{2A}}\\
    = & \sqrt{\frac{\xi^i_Z(I_r) T \log A}{2}}
\end{align*}
The third equality holds as each subtree rooted at $J^i(I_r, a), a\in\mathcal{A}(I_r)$ are independent.



Then we consider the case $P(I_r) = i$ with $\sup_{T, A}\mathbb{E}_{\mathcal{D}}R_{T}^i(J^i(I, a)) \geq \sqrt{\frac{\xi^i_{Z}(J^i(I, a))T\log A}{2}}, \forall a\in \mathcal{A}(I_r)$. We employ an additional notation $R_{T, imm}^i(I_r) = \max_{a\in \mathcal{A}(I_r)} \sum_{t=1}^T u_t(\sigma_t|_{I_r\to a}, I_r) - u_t(\sigma_t, I_r)$ to denote the immediate regret on the root node. It's obvious that $\mathbb{E}_{\mathcal{D}}R_{T, imm}^i(I_r)\geq 0$.


With Eq. \eqref{eq:regret-overall}, we can get:
\begin{align*}
    \sup_{T, A}\mathbb{E}_{\mathcal{D}}R_T^i(I_r) = & \sup_{T, A}\mathbb{E}_{\mathcal{D}} [R_{T, imm}^i(I_r) + \langle \sigma^i(I_r), [R_{n_T(J^i(I_r, a))}^i(J^i(I_r, a))]_{a\in\mathcal{A}(I_r)}\rangle] \\
    \geq & 0 + \langle \sigma^i(I_r), [\sup_{n_T(J^i(I, a)), A}\mathbb{E}_{\mathcal{D}}R_{n_T(J^i(I_r, a))}^i(J^i(I_r, a))]_{a\in\mathcal{A}(I_r)}\rangle\\
    \geq & \sum_{a\in\mathcal{A}(I_r)}\sqrt{\frac{\sigma^i(I_r, a)\xi^i(J^i(I_r, a))T\log A}{2}}\\
    \geq & \sqrt{\frac{\sum_{a\in \mathcal{A}_{I_r}}\sigma^i(I_r, a)\xi^i(J^i(I_r, a))\log T}{2}}\\
    = & \sqrt{\frac{\xi^i_Z(I_r) T \log A}{2}}
\end{align*}
Notice that the first inequality holds as given $\sigma^i(I_r)$, each subtree rooted at $J^i(I_r, a), a\in\mathcal{A}(I_r)$ are also independent.

Thus, with mathematical induction, we prove that $\sup_{T, A}\mathbb{E}_{\mathcal{D}}R_T^i\geq \sqrt{\frac{\xi^i_Z T \log A}{2}}$. With Corollary \ref{cor:same_order}, we can get the mini-max lower bound in Theorem \ref{thm:lowerbound}.
\end{proof} 
\end{document}